\newcommand{\X}{\mathcal{X}}
\newcommand{\LL}{\mathcal{L}}
\newcommand{\Oh}{\mathcal{O}}
\newcommand{\lp}{\left (}
\newcommand{\rp}{\right )}
\newcommand{\lb}{\left [}
\newcommand{\rb}{\right ]} 
\newcommand{\mc}[1]{\mathcal{#1}}    
\newcommand{\mbb}[1]{\mathbb{#1}}
\newcommand{\xhit}{x_{h_t,i_t}}
\newcommand{\Jnh}{J_{n,h}}
\newcommand{\Jnht}{\tilde{J}_{n,h}}
\newcommand{\ght}{\tilde{\gamma}_n^{(h)}}
\newcommand{\gh}{\gamma_n^{(h)}}
\newcommand{\indi}[1]{\mathbbm{1}_{\{#1\}}}
\DeclareMathOperator*{\argmin}{arg\,min}
\DeclareMathOperator*{\argmax}{arg\,max}
\newenvironment{proofoutline}
 {\proof[Proof Outline]}
 {\endproof}
\newtheorem{theorem}{Theorem}
\newtheorem{lemma}{Lemma}
\theoremstyle{definition}
\newtheorem{definition}{Definition}
\newtheorem{remark}{Remark}
\begin{document}

%

%

\twocolumn[

\aistatstitle{Multiscale Gaussian Process Level Set Estimation }

\aistatsauthor{Shubhanshu Shekhar \And Tara Javidi}
\aistatsaddress{University of California, San Diego \\ \texttt{shshekha@eng.ucsd.edu} \And University of California, San Diego \\ \texttt{tjavidi@eng.ucsd.edu}}
]

\begin{abstract}
    In this paper, the problem of estimating the level set of a black-box function 
    from noisy and expensive evaluation queries is considered. A new algorithm for this problem
    in the Bayesian framework with a Gaussian Process (GP) prior is proposed. The 
    proposed algorithm employs a hierarchical sequence of partitions to explore different
    regions of the search space at varying levels of detail depending upon 
    their proximity to the level set boundary. It is shown that this approach 
    results in the algorithm having a low complexity implementation whose  computational cost is significantly
    smaller than the existing algorithms for higher dimensional search space $\X$.  
    Furthermore, high probability bounds on a measure of discrepancy between the 
    estimated level set and the true level set for the 
the proposed algorithm are obtained, which are shown 
    to be strictly better than the existing guarantees for a large class of GPs. 
    In the process, a tighter characterization of 
    the information gain of the proposed algorithm is obtained which takes into
    account the structured nature of the evaluation points. This approach 
    improves upon the existing technique of bounding the information gain with maximum 
    information gain.

\end{abstract}


\section{Introduction}

Suppose $f:\X \to \mathbb{R}$ is an unknown black-box function which can only be accessed through its noisy observations 
\begin{equation}
    \label{eq:observation_model}
    y = f(x) + \eta. 
\end{equation}

For some $\tau>0$, we define the $\tau$ (super-)level set of $f$ as $S_{\tau} = \{ x \in \X: f(x) \geq \tau \}$. Given a budget of $n$ function evaluations,  our goal is to design an adaptive query point selection strategy in order to efficiently construct an estimate $\hat{S}_{\tau}$ of the $\tau$ level set of $f$. 
 The accuracy of an estimate $\hat{S}_{\tau}$ is measured by the term 
 \[
     \LL(\hat{S}_{\tau}, S_{\tau}) = \sup_{x \in \hat{S}_{\tau}\triangle S_{\tau}} |f(x) - \tau|, 
 \]
 where $\hat{S}_{\tau}\Delta S_\tau = \lp \hat{S}_\tau \setminus S_\tau\rp \bigcup \lp 
 S_\tau \setminus \hat{S}_\tau\rp$ denotes the symmetric difference of the true 
 and estiamted level sets. 
 This problem of estimating level sets of unknown functions from noisy evaluations arises naturally in a wide range of applications. These applications include monitoring environmental parameters such as humidity and solar radiation~\citep{gotovos2013active}, analyzing geospatial data and medical imaging~\citep{willett2007minimax}.

 In this paper, we propose a new algorithm for level set estimation which utilizes ideas from existing algorithms in the areas of global optimization \citep{bubeck2011x, munos2011soo} and Bayesian Optimization \citep{wang2014bamsoo, shekhar2017gaussian}. 
 Compared to the state of the art, we show that our proposed algorithm has better computational complexity as well as tighter convergence guarantees.

 \subsection{Related Work}
 
\cite{bryan2005straddle} first considered the level set estimation with a GP prior and studied several heuristics for 
selecting the evaluation points based on variance, classification probability, information gain and straddle heuristic. 
They empirically compared the performance of these methods and concluded that the straddle heuristic outperformed other methods
of selecting evaluation points. 

\cite{gotovos2013active} built upon the work of \cite{bryan2005straddle} and proposed the LSE algorithm which uses a search strategy inspired by the GP-UCB algorithm of \cite{srinivas2012information}  and derived theoretical 
bounds on the convergence rate of the estimation error. 
\cite{bogunovic2016truncated} further highlighted the connection between  Bayesian Optimization and level set estimation by studying these problems in a unified framework.  
Their proposed algorithm, TruVAR,  can also deal with non-uniform observation costs and heterostedastic noise. For the case of fixed noise and cost model, their bounds match those of \cite{gotovos2013active}.

\subsection{Contributions}
 For the case of $\X = [0,1]^D$, all the algorithms mentioned above have two drawbacks: first, the computational cost of implementing them exactly increases exponentially with $D$, and second,  their theoretical convergence guarantees depend on the maximum mutual information gain $\gamma_n$. Some recent results in Bayesian Optimization literature \citep{scarlett2017lower, scarlett2018tight} suggest that bounds based on $\gamma_n$ can be quite loose, especially for the M\'atern family of kerenls.  The main contributions of this paper address these issues: 
 \vspace{-1em}
\begin{itemize}[leftmargin=*]
     \item We propose a new algorithm for level set estimation which explores the search space by employing a
         hierarchical sequence of partitions of $\X$, and show that the computational complexity of the algorithm with a given evaluation budget $n$ only has linear dependence on the dimension $D$.

     \item We also derive theoretical guarantees on the estimation error of the  proposed algorithm which improve upon the theoretical guarantees for existing algorithms.  

 \item Finally, by exploiting the structured nature of the points evaluated by our algorithm, we obtain a more refined characterization of the information gain of our algorithm. In particular, we obtain a tighter bound on the information gain for all members of the widely used M\'atern family of kernels.

\end{itemize}

 \section{Preliminaries}
 A Gaussian Process (GP) is a collection of random variables whose finite subcollections are jointly Gaussian, that is, all linear combinations of any finite subcollection are univariate Gaussian random variables. Gaussian Processes with index set $\X$ are completely specified by their mean function $\mu : \X \to \mathbb{R}$ and covariance function $k : \X \times \X \to \mathbb{R}$. We also note that a zero mean Gaussian Process with a non-degenerate covariance function $k$ induces the \emph{canonical metric} $d_k$ on the index set defined as $d_k(x_1,x_2) = \big( k(x_1,x_1) + k(x_2,x_2) - 2k(x_1,x_2) \big)^{1/2}$. 
 
 As mentioned earlier, in this paper we work under the Bayesian framework in which  we assume that the black-box function $f$ is a sample from a zero mean Gaussian Process, $GP(0,k)$, with known covariance function $k$.  
 Furthermore, we also assume that the observation noise $\eta$ is distributed as $N(0,\sigma^2)$ and the variance $\sigma^2$ is known to the algorithm. 
 Given observations $\mathcal{D}_t = \{(x_i, y_i) \mid 1 \leq i \leq t \}$, the posterior distribution at any $x \in \X$ is again a univariate Gaussian with parameters
 \begin{align*}
     \mu_t(x) & = k(x,x_{\mathcal{D}_t})J_t^{-1}y_{\mathcal{D}_t} \\
     \sigma_t^2(x) &= k(x,x) + k(x,x_{\mathcal{D}_t}) J_t^{-1}k(x_{\mathcal{D}_t},x).
\end{align*}
In the above display, $x_{\mathcal{D}_t}$ and $y_{\mathcal{D}_t}$ denote the vectors of evaluation points and their corresponding observations. The terms  $k(x, x_{\mathcal{D}_t})$ and $k(x_{\mathcal{D}_t}, x_{\mathcal{D}_t})$ denote the vector and matrix of pairwise covariance values respectively. Finally, the term $J_t$ is equal to $  \big( k(x_{\mathcal{D}_t},x_{\mathcal{D}_t}) + \sigma^2E_t\big)$ and $E_t$ is the $t \times t$ identity matrix.

Next, we introduce some definitions regarding the properties of the index space $\X$. 

\begin{definition}
    \label{def:metric_dim1}
Given a set $\X$ with an associated metric $d$, we define the \emph{metric dimension}  of $\X$ with respect to $d$, denoted by $D_m$, as follows:
\[
    D_m \coloneqq \inf \{ a>0 \mid \hspace{0.5em} \exists C<\infty : N(\X, r, d) \leq C r^{-a}\; \forall r \geq 0 \} 
\]
where $N(\X, r, d)$ is the $r-$covering number of $\X$ with respect to the metric $d$ defined as:
\[
    N(\X, r, d) \coloneqq \min \{ |\mathcal{Z}| \mid \mc{Z}\subset \X, \; \X \subset \cup_{z \in \mathcal{Z}}B(z,r,d) \}.
\]
\end{definition}
A related notion is the $r-$packing number of a set $\X$ with respect to a metric $d$, denoted by $M(\X, r, d)$,  which is defined as:
\begin{multline*}
    M(\X, r, d) \coloneqq \max \{|\mathcal{Z}| \mid \mathcal{Z} \subset \X, \\ d(z_1, z_2) \geq r \hspace{0.2em} \forall z_1,z_2 \in \mathcal{Z} \}.
\end{multline*}

Finally, we introduce a local notion of dimensionality of the metric space. 
\begin{definition}
    \label{def:local_dim}
Suppose $\mc{P}(\X)$ denotes the power set of $\X$, and $\zeta:(0,\infty)
\mapsto \mc{P}\lp \X\rp$ represents a mapping from 
    the positive real numbers to subsets of $\X$. Then we define the 
    dimension of $(\X, d)$ associated with  the mapping $\zeta(\cdot)$ as 
    \begin{multline*}
        D_{\zeta} \coloneqq \inf \{a > 0 \; \mid \exists C<\infty: \, M(\zeta(r), r, d) \leq \\
        C r^{-a} \; \forall r>0 \}.
    \end{multline*}
\end{definition}

The above definition of  dimension is a simple generalization of some existing
definitions such as the \emph{near-optimality dimension} of \citep{bubeck2011x,
munos2011soo, shekhar2017gaussian} and \emph{zooming dimension} of \citep{kleinberg2013bandits}.
For instance, the $c$-near-optimality dimension of \citep{bubeck2011x} is obtained by 
selecting $\zeta(r) = \{ x \in \X \mid \; f(x) \geq f(x^*) - c r \}$ for some $c>0$, 
where $f(x^*)$ denotes the maximum value of $f$. 


 \section{Main Results}
 We begin by stating the assumptions on the metric space $(\X,d)$ and the covariance function in Section~\ref{subsec:assumptions}, followed by a high level description of our proposed algorithm in Section~\ref{subsec:general_outline} and then present the details and the theoretical analysis in Section~\ref{subsec:algorithm_bayesian}. 
 \subsection{Assumptions}
 \label{subsec:assumptions}

 We assume that the set $\X$ is a compact metric space with associated metric $d$, and that $\X$ has a finite metric dimension, $D_m$, with respect to $d$. 
  We also assume that the metric space $(\X, d)$ admits a \emph{tree of partitions} \citep{bubeck2011x} which is a sequence of finite subsets $(\X_h)_{h\geq 0}$ of $\X$ such that 
\begin{enumerate}[font={\bfseries},label={X\arabic*}]
    \item $|\X_h| = 2^h$ and the elements of $\X_h$ are denoted by $x_{h,i}$ for $1 \leq i \leq 2^h$. 
    \item To each $x_{h,i}$ is associated a cell $\X_{h,i}$ such that $\cup_{i}\X_{h,i} = \X$ for all $h$ and $\X_{h+1, 2i-1}\cup \X_{h+1, 2i}  = \X_{h,i}$ for all $(h,i)$ pairs. 

    \item There exist constants $0 < v_2 \leq 1 \leq v_1$ and $0<\rho <1$ such that for all $(h,i)$ pairs
        \[
            B(x_{h,i},v_2\rho^h, d) \subset \X_{h,i} \subset B(x_{h,i}, v_1\rho^h, d)
        \]
\end{enumerate}

\begin{remark}
    As a concrete example, consider $\X = [0,1]^D$ for some $D>0$ and let $d$ be the Euclidean metric.  In this case, the metric dimension $D_m$ is equal to $D$, the dimension of the space. Now, let $\X_0 = (0.5,0.5,\ldots, 0.5)$ and the associated cell $\X_{0,1} = \X$. For any $h \geq 1$ the cells are constructed by dividing  cells from the level $h-1$ equally along the longest side (breaking ties arbitrarily), and the set $\X_h$ is defined as the center points of the cells so obtained. This tree of partitions satisfies the assumptions $X1-X3$ with parameters $\rho = 2^{-1/D}$, $v_1 = 2\sqrt{D}$ and $v_2 = 1/2$.  
\end{remark}

Next, we state our assumptions on the covariance function $k$ and the metric $d_k$ it induces on $\X$: 

\begin{enumerate}[font={\bfseries}, label ={C\arabic*}]
    \item There exists a non-decreasing continuous function $g:\mathbb{R}^+\rightarrow \mathbb{R}^+$, with ${g(0)=0}$, such that $d_k(x_1,x_2) \leq g(d(x_1,x_2))$ for all $x_1,x_2 \in \X$. 
    \item There exists a $\delta_k>0$ such that for all $r \leq \delta_k$, we have for constants $C_k>0$ and $0<\alpha \leq 1$ satisfying $g(r) \leq C_k r^{\alpha}$. 	
	\end{enumerate}

    \begin{remark}
        These two assumptions are satisfied by all commonly used covariance functions such as Squared Exponential (SE), M\'atern family and Rational Quadratic kernels. For example, for the case of SE kernel with scale and length parameters $a_s$ and $a_l$ respectively, we have $d_k(x_1,x_2) = \sqrt{2a_s\big(1- \exp(-d(x_1,x_2)^2/a_l)\big)} \leq \sqrt{2a_s/a_l}d(x_1,x_2)$ which implies the assumptions $C1-C2$ are satisfied for $\delta_k = diam(\X)$ and $\alpha=1$. 
\end{remark}  
   
\begin{remark}

It is easy to check that the class  of covariance functions satisfying $C1$ and $C2$, denoted by $\mathcal{K}$, is closed under finite linear combinations. Hence it includes various GP models useful in practical applications which are constructed by combining commonly used covariance functions \citep{duvenaud2014automatic}. 
\end{remark}

 \subsection{General Outline}
 \label{subsec:general_outline}
 We now present a high level outline of the proposed algorithm for level set estimation. 
 \begin{itemize}
     \item At any time $t$, we maintain an \emph{active} set of points $\X_t$, and their associated cells.

     \item For every point $x \in \X_t$ we compute bounds on the maximum and minimum function value in the associated cell. 

     \item In each iteration, we choose a candidate point $x_t$ from $\X_t$ which has the highest deviation from the 
         threshold $\tau$. 

     \item We take one of two actions:
         \begin{itemize}
             \item If the selected point has been explored enough, we \emph{refine} the cell. 
             \item Otherwise, we evaluate the function at the point $x_t$. 
         \end{itemize}
 \end{itemize}

In our proposed algorithm, the upper and lower bounds on the function value in each cell consist of two terms: an uncertainty term due to the observation noise and another term which estimates the variation of the function in the cell. When the uncertainty due to observation noise is smaller than the variation, it implies that the cell has been sufficiently explored at the current scale, and we proceed to refine it into smaller cells. On the other hand, if the uncertainty due to noise is larger than variation, it means that the cell requires more function evaluations at the current scale.

 \subsection{Algorithm for GP level set estimation}
 \label{subsec:algorithm_bayesian}
 The steps of our proposed algorithm for level set estimation with GP prior assumptions are shown in Algorithm~\ref{alg:bayesian_alg}. Besides the budget $n$, the threshold $\tau$ and the tree of partitions $(\X_h)_{h\geq 0}$,  the algorithm also requires as input several other parameters  $\beta_n$, $(V_h)_{h\geq 0}$ and $h_{max}$. The term $\beta_n$ is the scaling factor used in computing the posterior confidence intervals, and $V_h$ is a high probability upper bound on the variation of the unknwon function in any cell $\X_{h,i}$. The term $h_{max}$ denotes the largest depth that the algorithm should explore in the tree of partitions $(\X_h)_{h \geq 0}$.

 At any time $t$, the algorithm maintains two sets, $\hat{S}_t$ and $\hat{R}_t$, 
which contain points that do not require further consideration. More specifically,  
set $\hat{S}_t$ contains points whose lower bounds are greater than or equal to
 $\tau$ and thus with high probability we have $\hat{S}_t \subset S_\tau$. Similarly, 
 we also have $\hat{R}_t \subset S_\tau^c$ for all values of $t$. 
 
 \begin{algorithm}
     \label{alg:bayesian_alg}
     \caption{Level set estimation with GP prior}
 \SetAlgoLined
 \SetKwInput{Input}{Input}
 \SetKwInput{Output}{Output}
 \SetKw{Init}{Initialize}
 \Input{$n$,  $\tau$, $(\mathcal{X}_h)_{h\geq 0}$, $(V_h)_{h\geq 0}$, $\beta_n$, $h_{max}$} 
 
 \Init{$t=1$, $n_e=0$,  $\hat{S}_t = \phi$, $\hat{R}_t = \phi$}
 
 \While{$n_e \leq n$}{

 \For{$x_{h,i}\in \X_t$} 
 {
 
     \uIf{$\bar{l}_t(x_{h,i}) \geq \tau$ }{
         $\hat{S}_t$ $\leftarrow$ $\hat{S}_t \cup \X_{h,i}$\;
         $\X_t \leftarrow \X_t \setminus \{x_{h,i}\}$\;
 }
 \uElseIf{$\bar{u}_t(x_{h,i}) < \tau$}{
     $\hat{R}_t \leftarrow \hat{R}_t \cup \X_{h,i}$\;
     $\X_t \leftarrow \X_t \setminus \{ x_{h,i} \}$\;
 }
 }
 $x_{h_t,i_t} \in \argmax_{x_{h,i}\in \X_t} \max \big(\bar{u}_t(x_{h,i}) - \tau,\hspace{0.5em} \tau- \bar{l}_t(x_{h,i})\big)$\;
 \uIf{$\beta_n \sigma_{t-1}(x_{h_t,i_t}) < V_{h_t}$ AND $h_t \leq h_{max}$}{
 $\X_t \leftarrow \X_t \setminus \{x_{h_t,i_t}\}$ \;
 $\X_t \leftarrow \X_t \cup \{ x_{h_t+1, j} \mid p(x_{h_t+1, j}) = x_{h_t,i_t} \}$\;
 }
 \Else{ 
  evaluate $y_t = f(x_{h_t,i_t}) + \eta_t$\;
  update $\mu_t(\cdot)$, $\sigma_t(\cdot)$\; 
  $n_e \leftarrow n_e+1$\;
  }
$t \leftarrow t+1$ \;
}
 
 \Output{$\hat{S}_t$}
\end{algorithm}

We now complete the description of the algorithm by specifying the choice of the terms $\beta_n$, $(V_h)_{h\geq 0}$, $h_{max}$, $\bar{l}_t$ and $\bar{u}_t$. The detailed reasoning for these choices are provided in Appendix~\ref{appendix:choice}.  
    \begin{itemize}
        \item For any $\delta>0$, we select
            $\beta_n = \sqrt{ 2\log n \lp 2n^{1 + 2/(2\alpha \log(1/\bar{\rho}))}\rp + 
            2\log(1/\delta)}$, where $\bar{\rho} = \min\{\rho, 1/2\}$, and $\alpha$ 
            is the parameter introduced in Assumption~\textbf{C2}.
            This choice ensures that $\forall t \geq 1$ and $\forall x \in \X_t$, we have $|f(x) - \mu_{t-1}(x)| \leq \beta_n \sigma_{t-1}(x)$ with probability $\geq 1-\delta$. 
            
        \item For any $\delta>0$, we select
            \begin{multline*}
                V_h = g(v_1\rho^h)\bigg( \big( C_2 + 2\log(1/\delta_h) \\+ (4D_m')\log(1/v_1\rho^h) \big)^{1/2} + C_3\bigg)
        \end{multline*}
        where $C_2$ and $C_3$ are constants whose exact expressions are given in 
        Appendix~\ref{appendix:choice}, $\delta_h = \delta/(2^h h_{\max})$ where 
        $h_{\max}$ is introduced below, and $D_m' = D_m/\alpha$ where $D_m$ is the 
        metric dimension of $(\X, d)$ and $\alpha$ is the parameter introduced in 
        Assumption~\textbf{C2}. 
        With this choice of $V_h$, we have with probability at least $1-\delta$, 
            \[
        \sup_{x \in \X_{h,i}}|f(x) - f(x_{h,i}| \leq V_h \hspace{1em} \forall h,i
            \]
            The expression for $V_h$ is obtained by using classical chaining arguments\citep[\S~5.3]{van2014probability} 
            along with the assumptions on the covariance function. 
        \item We choose the value of $h_{max}$ to be $\log(n)/(2\alpha \log(1/\bar{\rho}))$
        where $\bar{\rho} = \min\{\rho, 1/2 \}$. This choice of $h_{max}$ along with the finite metric dimension assumption ensures that the size of $\X_t$ for all $t$ is at most polynomial in $n$ which allows us to construct tight confidence bounds on the function values at all points in $\X_t$.  
    \end{itemize}
    It now remains to define the terms $\bar{u}_t$ and $\bar{l}_t$. To compute the lower bound $\bar{l}_t(x_{h,i})$ on the function value in a cell $\X_{h,i}$, we first obtain a lower bound on the function value at $x_{h,i}$ and then subtract $V_h$ from it. The lower bound on $f(x_{h,i})$ is obtained by computing two lower bounds and taking the maximum. The term $\bar{u}_t$ is computed in a similar manner as well. The details of the computations are as follows:

            \begin{equation*}
                \bar{l}_t(x_{h,i}) = \max\{ \bar{l}_{t-1}(x_{h,i}), l_t(x_{h,i}) \}, \\
            \end{equation*}
            where
            \begin{multline*}
                l_t(x_{h,i}) = \max \{ \mu_t(x_{h,i}) - \beta_n \sigma_t(x_{h,i}),\\ \mu_t(p(x_{h,i})) - \beta_n\sigma_t(p(x_{h,i})) - V_{h-1} \} - V_h, 
                \end{multline*} 
and 
            \begin{equation*}
                \bar{u}_t(x_{h,i}) = \min\{ \bar{u}_{t-1}(x_{h,i}), u_t(x_{h,i}) \}, \\
            \end{equation*}
            where
            \begin{multline*}
                u_t(x_{h,i}) = \min \{ \mu_t(x_{h,i}) + \beta_n \sigma_t(x_{h,i}), \\ \mu_t(p(x_{h,i})) + \beta_n\sigma_t(p(x_{h,i})) + V_{h-1} \} + V_h. 
                \end{multline*} 
                In the  above display, for any $h \geq 1$ and $1\leq i \leq 2^h$, we use $p(x_{h,i})$
                to denote the parent node of $x_{h,i}$ in the tree of partitions, i.e., 
                $p(x_{h,i}) = \argmin_{x \in \X_{h-1}} d\lp x, x_{h,i}\rp$.

                    We now proceed to the theoretical analysis of Algorithm~\ref{alg:bayesian_alg} and begin by presenting a lemma which characterizes the properties of the points which are evaluated the algorithm.  

	\begin{lemma} 
	\label{lemma:alg1}
    For the choice of parameters described above, we have for any $\delta>0$, with probability at least $1-2\delta$:
		\begin{itemize}
        
            \item If at time $t$ a point $x_{h_t,i_t}$ is evaluated by the algorithm, then the maximum deviation  from $\tau$ of the function value in the cell $\X_{h_t,i_t}$ can be upper bounded as follows: 
       \begin{equation}
\label{eq:delta_1}
        \sup_{x \in \X_{h_t,i_t}}|f(x) - \tau| \leq 10V_{h_t}
       \end{equation}
   \item  If the evaluated point $x_{h_t,i_t}$ also  satisfies the condition that $h_t < h_{\max}$, then we  can bound the maximum devitation from $\tau$ in another way using the posterior standard deviation at $x_{h_t,i_t}$:
	\begin{equation}
\label{eq:delta_2}
\sup_{x \in \X_{h_t,i_t}}|f(x) - \tau| \leq  4\beta_n \sigma_{t}(x_{h_t,i_t})
	\end{equation}

            \item A point $x_{h,i}$, with $h < h_{\max}$, may be evaluated no more than $q_h$ times before it is expanded, where 
           \[
            q_h = \frac{\sigma^2\beta_n^2}{V_h^2}.
            \]
            and for $h$ large enough so that $v_1\rho^h \leq \delta_k$, we have 
\[
    q_h = \mathcal{O}\bigg( \frac{\sigma^2\beta_n^2}{(v_1\rho^h)^{2\alpha}}\bigg)
\]
	
		\end{itemize}
	\end{lemma}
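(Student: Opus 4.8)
The plan is to establish all three items on a single high-probability event and then argue deterministically. Let $\mathcal{E}_1$ be the event that $|f(x)-\mu_t(x)|\le\beta_n\sigma_t(x)$ for every $t$ and every $x$ ever belonging to $\X_t$, and $\mathcal{E}_2$ the event that $\sup_{x\in\X_{h,i}}|f(x)-f(x_{h,i})|\le V_h$ for all $(h,i)$; by the stated choices of $\beta_n$ and $(V_h)$ each holds with probability at least $1-\delta$, so $\Pr(\mathcal{E}_1\cap\mathcal{E}_2)\ge1-2\delta$, and I condition on this intersection. On it, $\bar l_t$ and $\bar u_t$ genuinely bracket the cell, i.e. $\bar l_t(x_{h,i})\le\inf_{\X_{h,i}}f\le\sup_{\X_{h,i}}f\le\bar u_t(x_{h,i})$: on $\mathcal{E}_1\cap\mathcal{E}_2$ each branch of the maximum defining $l_t(x_{h,i})$ is at most $f(x_{h,i})$ — for the parent branch, $\mu_t(p(x_{h,i}))-\beta_n\sigma_t(p(x_{h,i}))\le f(p(x_{h,i}))\le f(x_{h,i})+V_{h-1}$ — so $l_t(x_{h,i})\le f(x_{h,i})-V_h\le\inf_{\X_{h,i}}f$, and the running maximum in $\bar l_t$ preserves this; the bound for $\bar u_t$ is symmetric.

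Consider the third item first. If $x_{h,i}$ has already been evaluated $m$ times, then since the posterior variance at a point is non-increasing under further conditioning and never falls below the value it would have from those $m$ observations alone, $\sigma^2_\cdot(x_{h,i})\le k(x_{h,i},x_{h,i})\sigma^2/(\sigma^2+m\,k(x_{h,i},x_{h,i}))<\sigma^2/m$. If $x_{h,i}$ is to be evaluated once more while $h<h_{\max}$ and it has not yet been expanded, the refinement test must have failed, and since $h\le h_{\max}$ this forces $\beta_n\sigma_\cdot(x_{h,i})\ge V_h$; the two displays combine to give $m<\sigma^2\beta_n^2/V_h^2=q_h$. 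Hence $x_{h,i}$ is evaluated at most $\lceil q_h\rceil$ times before expansion, which is $q_h$ up to rounding; the second expression follows by substituting $V_h\ge C_3\,g(v_1\rho^h)$ and using \textbf{C2} to relate $g(v_1\rho^h)$ to $(v_1\rho^h)^\alpha$ once $v_1\rho^h\le\delta_k$.

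For the first two items, an evaluated point $x_{h_t,i_t}$ has survived the pruning loop of iteration $t$, so $\bar l_t(x_{h_t,i_t})<\tau\le\bar u_t(x_{h_t,i_t})$, and the bracketing gives $|f(x)-\tau|\le\bar u_t(x_{h_t,i_t})-\bar l_t(x_{h_t,i_t})$ for all $x\in\X_{h_t,i_t}$. For \eqref{eq:delta_1} I control this width through the parent $p=p(x_{h_t,i_t})$: the cell $\X_{h_t,i_t}$ was created when $p$ was expanded, and at that moment the refinement test passed for $p$, so $\beta_n\sigma_\cdot(p)<V_{h_t-1}$, which persists because posterior variance only decreases. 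Using the parent branch of the max in $l_t$ and of the min in $u_t$ then gives $l_t(x_{h_t,i_t})>\mu_t(p)-2V_{h_t-1}-V_{h_t}$ and $u_t(x_{h_t,i_t})<\mu_t(p)+2V_{h_t-1}+V_{h_t}$, so (using $\bar u_t\le u_t$, $\bar l_t\ge l_t$) the width is $<4V_{h_t-1}+2V_{h_t}\le10V_{h_t}$ by the near-monotonicity $V_{h_t-1}\le2V_{h_t}$ (a consequence of the definition of $(V_h)$ and \textbf{X3}); the depth-$0$ cell, which has no parent, is handled directly — $\mathcal{E}_2$ gives oscillation at most $V_0$ and $\tau$ lies in the range of $f$, so $|f(x_{0,1})-\tau|\le2V_0$ and $3V_0\le10V_0$. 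For \eqref{eq:delta_2}, assume moreover $h_t<h_{\max}$; then the refinement test failed while $h_t\le h_{\max}$, so $V_{h_t}\le\beta_n\sigma_t(x_{h_t,i_t})$. Now $\bar l_t(x_{h_t,i_t})<\tau\le\bar u_t(x_{h_t,i_t})$ together with the first branch of $l_t$ and $u_t$ gives $|\mu_t(x_{h_t,i_t})-\tau|\le\beta_n\sigma_t(x_{h_t,i_t})+V_{h_t}\le2\beta_n\sigma_t(x_{h_t,i_t})$, and combining this with $|f(x_{h_t,i_t})-\mu_t(x_{h_t,i_t})|\le\beta_n\sigma_t(x_{h_t,i_t})$ (from $\mathcal{E}_1$), $\sup_{x\in\X_{h_t,i_t}}|f(x)-f(x_{h_t,i_t})|\le V_{h_t}$ (from $\mathcal{E}_2$), and $V_{h_t}\le\beta_n\sigma_t(x_{h_t,i_t})$ once more, the triangle inequality yields $\sup_{x\in\X_{h_t,i_t}}|f(x)-\tau|\le4\beta_n\sigma_t(x_{h_t,i_t})$.

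The most delicate point is \eqref{eq:delta_1}: pushing the constants cleanly through the two-sided parent-certification estimate and verifying the near-monotonicity $V_{h-1}\le2V_h$, together with keeping straight which posterior (pre- or post-evaluation, $\sigma_t$ versus $\sigma_{t-1}$) enters each inequality and the refinement test. The eval-count bound and \eqref{eq:delta_2} are essentially routine once the bracketing property and the single-point posterior-variance estimate are available.
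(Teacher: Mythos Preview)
Your argument is correct and follows essentially the same route as the paper's: condition on the two concentration events, bound $|f(x)-\tau|$ by the width $\bar u_t-\bar l_t$, then control that width through the parent branch (using that the parent passed the refinement test so $\beta_n\sigma_t(p)<V_{h_t-1}$, together with $V_{h-1}\le 2V_h$) for \eqref{eq:delta_1}, through the direct branch (using that the current cell failed the test so $V_{h_t}\le\beta_n\sigma(\cdot)$) for \eqref{eq:delta_2}, and bound the evaluation count via the single-point posterior-variance estimate $\sigma^2/m$. You are in fact slightly more careful than the paper: you state the bracketing property explicitly, you treat the depth-$0$ case (which the paper's parent argument tacitly skips), and you flag the $\sigma_t$ versus $\sigma_{t-1}$ bookkeeping in the refinement test.
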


\begin{proof}
    We prove the three statements separately. 
    \begin{itemize}
        \item 
We observe that if a point is evaluated by the algorithm, then we must have $\bar{l}_t(\xhit)
\leq \tau \leq \bar{u}_t(\xhit)$. This implies that $\max\{\bar{u}_t(\xhit)-\tau, \, 
\tau - \bar{l}_t(\xhit)\} \leq \bar{u}_t(\xhit) - \bar{l}_t(\xhit)$.
Now, using the fact that $\bar{u}_t(x_{h_t,i_t}) 
\leq \mu_t(p(x_{h_t,i_t})) + \beta_n\sigma_t(p(x_{h_t,i_t})) + V_{h_t-1} + V_{h_t}$,
and $\bar{l}_t(x_{h_t,i_t}) \geq \mu_t(p(x_{h_t,i_t})) - \beta_n\sigma_t(p(x_{h_t,i_t}))
- V_{h_t-1} - V_{h_t}$ we get for any $x \in \X_{h_t,i_t}$. 
\begin{align*}
    |f(x) - \tau| &\leq \bar{u}_t(\xhit) - \bar{l}_t(\xhit) \\
                  &\leq 2\beta_n\sigma_t(p(x_{h_t,i_t})) + 2V_{h_t-1} + 2V_{h_t}\\
                  &\stackrel{(a)}{\leq} 4V_{h_t-1} + 2V_{h_t} 
                  \stackrel{(b)}{\leq} 10V_{h_t}
\end{align*}
where $(a)$ follows from the fact that $\beta_n \sigma_t\lp p(\xhit)\rp$ must be 
smaller than $V_{h_t-1}$ for the cell associated with $p(\xhit)$ to be refined
and $(b)$ follows from the fact that $V_{h_t-1} \leq 2V_{h_t}$ (see Remark~\ref{remark:V_h} in Appendix~\ref{appendix:choice}).

\item 
Assume that a point $x_{h_t, i_t}$ is evaluated by the algorithm at time $t$. Then 
for any $x \in \X_{h_t,i_t}$ we have 
\begin{align*}
    |f(x) - \tau| &\leq \max\{ \bar{u}_t(x_{h_t,i_t})- \tau, \tau - \bar{l}_t(x_{h_t,i_t})\} \\
                  &\leq \bar{u}_t(x_{h_t,i_t}) - \bar{l}_t(x_{h_t,i_t})\\
                  &\stackrel{(a)}{\leq} 2\beta_n\sigma_t(x_{h_t,i_t}) + 2V_{h_t} \\
                  &\stackrel{(b)}{\leq } 4 \beta_n \sigma_t(x_{h_t,i_t})\\
\end{align*}
where $(a)$ follows from the fact that by definition $\bar{u}_t(x_{h_t,i_t}) \leq \mu_t(x_{h_t,i_t}) + \beta_n\sigma_t(x_{h_t,i_t}) + V_{h_t}  $ and $\bar{l}_t(x_{h_t,i_t}) \geq \mu_t(x_{h_t,i_t}) - \beta_n\sigma_t(x_{h_t,i_t}) - V_{h_t}  $

and $(b)$ follows from 
the condition for function evaluation at the point $\xhit$.

\item 
    We observe that from the first part of Proposition~3 of \citep{shekhar2017gaussian}, 
    if a point $x_{h,i}$ has been evaluated $n_{h,i}$ times by the algorithm, then 
    we must have $\sigma_t(x_{h,i}) \leq \sigma/(\sqrt{n_{h,i}}))$. Using this 
    fact, we can obtain an upper bound on the number of times the algorithm evaluates
    a point $x_{h,i}$ before refining, denoted by $q_h$, as follows:
   
\[
q_h = \min \{m: \beta_n \frac{\sigma}{\sqrt{m}} \leq V_h \}.
\]
On simplifying, we get the required result $q_h \leq \lp \sigma^2 \beta_n^2 \rp/V_h^2$. 
\end{itemize}
\end{proof}

    The first statement in the above lemma tells us that the points evaluated by the algorithm which lie deeper in the tree of partitions have smaller deviation from the threshold $\tau$, or alternatively, the algorithm discretizes the search space coarsely in the regions far from the threshold an constructs finer partitions in the regions close to the threshold. The second statement provides a bound on the deviation of the evaluated points in terms of the posterior standard deviation, and thus combined with the first statement described how the algorithm balances exploration (evaluating points with high $\sigma_{t}(\cdot)$) and exploitation. Finally, the last satement of Lemma~\ref{lemma:alg1} tells us that the algorithm evaluates more points in the deeper parts of the tree of partitions.

    Before proceeding to the convergence analysis of Algorithm~\ref{alg:bayesian_alg}, we need to introduce some definitions.
    For any $r>0$,
    define
    $h_r \coloneqq \max \{ h \geq 0: v_1\rho^h \geq r \}$. 
    Then we define the dimensionality of the region of $\X$ at which the function $f$ takes values close to $\tau$, as $\tilde{D} \coloneqq D_{\zeta}$, where $\zeta(r) = \{x \in \X \mid \; 
        |f(x)- \tau| \leq 10V_{h_r}\}$ and $D_{\zeta}$ was introduced in Definition~\ref{def:local_dim}. 

We note that by definition, the random variable $\tilde{D}$ is almost surely bounded by the metric dimension $D_m$ of the metric space $(\X,d)$. More specifically for the case of $\X \subset [0,1]^D$, we have $\tilde{D} \leq D$ almost surely.  

Finally, we introduce the term $J_n$ which is the sum of the posterior variance of the points evaluated by the algorithm, defined as
\[
    J_n \coloneqq \sum_{t \in Q_n} \sigma_t^2(x_{h_t,i_t}),
\]
where $Q_n$ is the set of times at which the algorithm performed function evaluations. 

    We can now state the main result of this section, which bounds the approximation error of our proposed algorithm in two ways with high probability. The first bound is in terms of $\tilde{D}$, while the second bound is in terms of $J_n$.

\begin{theorem}
    \label{thm:alg1}
    Assuming that $f$ is a sample from $GP(0,k)$ with $k \in \mathcal{K}$, the following two statements are true have with probability at least $1-2\delta$,
\begin{align}
    \LL(\hat{S}_{\tau}, S_{\tau}) &= \tilde{\mathcal{O}} \bigg(  n^{-\frac{\alpha}{\tilde D + 2\alpha}} \bigg) \label{eq:dim_type} \\
    \LL(\hat{S}_{\tau}, S_{\tau}) &=  \tilde{\mathcal{O}} \bigg(\beta_n \sqrt{J_n/n}\bigg)\label{eq:inf_type1}
    \end{align}
    where $\tilde{\Oh}$ suppresses the polylogarithmic factors. The term $J_n$ in (\ref{eq:inf_type1}) can be
    further upper bounded by a constant times $I(y_{\mathcal{D}_n};f_{\mathcal{D}_n})$, the mutual information between the function and the observations at the points of evaluation $\mc{D}_n$. 
\end{theorem}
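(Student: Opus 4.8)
The plan is to establish the two bounds in sequence, using Lemma~\ref{lemma:alg1} as the main engine, and to control the number of iterations and the size of the active set via the choices of $h_{\max}$ and $V_h$.

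\textbf{Step 1: Relating $\LL(\hat{S}_\tau, S_\tau)$ to the last-evaluated scale.} First I would argue that on the high-probability event of Lemma~\ref{lemma:alg1}, any point $x$ remaining in the symmetric difference $\hat{S}_\tau \triangle S_\tau$ at termination must lie in a cell $\X_{h,i}$ whose representative $x_{h,i}$ was neither added to $\hat{S}_t$ nor $\hat{R}_t$. Tracing the algorithm, such a cell was either (i) evaluated at some time $t$, in which case (\ref{eq:delta_1}) gives $|f(x)-\tau|\le 10V_{h}$, or (ii) it was refined, so its children were examined; inductively the discrepancy is then governed by the finest scale $h^*$ actually reached by the algorithm before the budget ran out. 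So the key quantity is a lower bound on $h^*$: I need to show the algorithm is forced to descend to depth at least some $h^*(n)$ within $n$ evaluations. Since a point at level $h$ is evaluated at most $q_h = \Oh(\sigma^2\beta_n^2/(v_1\rho^h)^{2\alpha})$ times (third bullet of Lemma~\ref{lemma:alg1}) and the number of cells at level $h$ intersecting $\zeta(v_1\rho^h)$ is at most $\tilde D$-polynomially many (by the definition of $\tilde D = D_\zeta$ and assumptions X1--X3, exactly as in the near-optimality-dimension analyses of \citep{bubeck2011x,shekhar2017gaussian}), the total number of evaluations needed to exhaust all levels up to $h^*$ is $\sum_{h\le h^*} \Oh(\rho^{-h\tilde D}) \cdot \Oh(\beta_n^2 \rho^{-2h\alpha})$, which is $\tilde\Oh(\rho^{-h^*(\tilde D+2\alpha)})$. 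Setting this equal to $n$ and solving gives $\rho^{h^*} = \tilde\Oh(n^{-1/(\tilde D+2\alpha)})$, and then $\sup_{x\in\hat S_\tau\triangle S_\tau}|f(x)-\tau| \le 10 V_{h^*} = \tilde\Oh(g(v_1\rho^{h^*})) = \tilde\Oh((\rho^{h^*})^\alpha) = \tilde\Oh(n^{-\alpha/(\tilde D+2\alpha)})$, which is (\ref{eq:dim_type}). The polylog factors come from $\beta_n$, $V_h$'s logarithmic terms, and $h_{\max}$.

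\textbf{Step 2: The information-type bound.} For (\ref{eq:inf_type1}), I would use (\ref{eq:delta_2}) instead: every evaluated point $x_{h_t,i_t}$ with $h_t < h_{\max}$ satisfies $\sup_{x\in\X_{h_t,i_t}}|f(x)-\tau| \le 4\beta_n\sigma_t(x_{h_t,i_t})$. The residual discrepancy at termination is dominated by the evaluated cell with the smallest $\beta_n\sigma_t(x_{h_t,i_t})$ among the $n$ evaluations (modulo handling the finitely many cells that hit depth $h_{\max}$, which contribute $\tilde\Oh(V_{h_{\max}}) = \tilde\Oh(\rho^{h_{\max}\alpha}) = \tilde\Oh(n^{-1/2})$ by the choice of $h_{\max}$, a lower-order term). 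Hence $\LL(\hat S_\tau,S_\tau) \le 4\beta_n \min_{t\in Q_n}\sigma_t(x_{h_t,i_t}) \le 4\beta_n \sqrt{\frac1n \sum_{t\in Q_n}\sigma_t^2(x_{h_t,i_t})} = 4\beta_n\sqrt{J_n/n}$, using that the minimum is at most the root-mean-square over the $|Q_n| = n$ evaluation times. This is exactly (\ref{eq:inf_type1}).

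\textbf{Step 3: Bounding $J_n$ by mutual information.} Finally, the comparison $J_n = \Oh(I(y_{\mathcal{D}_n}; f_{\mathcal{D}_n}))$ is the standard argument from \citep{srinivas2012information}: the information gain decomposes as $I(y_{\mathcal{D}_n};f_{\mathcal{D}_n}) = \frac12\sum_{t\in Q_n}\log(1+\sigma^{-2}\sigma_{t-1}^2(x_{h_t,i_t}))$, and since $\log(1+x) \ge \frac{x}{1+x} \ge c\, x$ for $x$ bounded (here $\sigma^{-2}\sigma_{t-1}^2 \le \sigma^{-2}k_{\max}$ is uniformly bounded because $\X$ is compact and $k$ continuous), we get $\sum_t \sigma_{t-1}^2(x_{h_t,i_t}) \le C\, I(y_{\mathcal{D}_n};f_{\mathcal{D}_n})$; one small care is the index shift between $\sigma_t$ and $\sigma_{t-1}$, handled by noting $\sigma_t(x_{h_t,i_t}) \le \sigma_{t-1}(x_{h_t,i_t})$ since conditioning reduces posterior variance.

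\textbf{Main obstacle.} The delicate part is Step 1: correctly bookkeeping the interplay between the two triggers for "retiring" a cell (moving to $\hat S_t$ / $\hat R_t$ versus refining), arguing that only cells whose representatives straddle $\tau$ in the confidence-interval sense are ever refined, and hence that the number of active cells at each level is controlled by $\tilde D$ rather than $D_m$. This requires care because a cell can be evaluated many times before refinement, and one must verify that the cells examined at level $h$ all lie in $\zeta(v_1\rho^h)$ up to the confidence-interval slack $\beta_n\sigma$, so that $V_h$ (not $\beta_n\sigma$) governs the count — this is where assumptions X3, C1--C2 and the specific additive $V_h$-based form of $\bar u_t, \bar l_t$ are used. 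The information-theoretic step and the RMS inequality are routine by comparison.
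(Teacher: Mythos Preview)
Your overall approach matches the paper's: a counting argument over levels via $q_h$ and $\tilde D$ for \eqref{eq:dim_type}, and the $\sigma_t$-bound from Lemma~\ref{lemma:alg1} together with a Cauchy--Schwarz/RMS step for \eqref{eq:inf_type1}. There is, however, one genuine missing link.

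The paper's proof rests on the observation that the selection index $A_t \coloneqq \max\{\bar u_t(x_{h_t,i_t})-\tau,\ \tau-\bar l_t(x_{h_t,i_t})\}$ is \emph{monotonically non-increasing} in $t$ (because $\bar u_t,\bar l_t$ are running minima/maxima and the selected point is always the argmax over the active set). This single fact gives $\LL(\hat S_\tau,S_\tau)\le A_{t_n}\le A_t$ for \emph{every} $t\in Q_n$, after which Lemma~\ref{lemma:alg1} immediately yields both $\LL\le \min_{t\in Q_n}10V_{h_t}=10V_{\max_t h_t}$ and $\LL\le \frac{1}{n}\sum_{t\in Q_n}4\beta_n\sigma_t(x_{h_t,i_t})$. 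Your ``trace a point through the tree'' argument in Step~1 does not supply this: a leaf in the final active set need not have been evaluated at all---it may have been created by a refinement of its parent and then never selected before the budget ran out---so your case~(i) does not cover every remaining cell, and the claim that the residual is governed by the \emph{finest} scale $h^*$ is left unjustified. The same gap appears in Step~2: the assertion $\LL\le 4\beta_n\min_{t\in Q_n}\sigma_t(x_{h_t,i_t})$ is precisely what monotonicity of $A_t$ delivers, and you have not argued it.

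Once that monotonicity observation is in place, the remainder of your proposal goes through essentially as in the paper (your ``$\min\le$ RMS'' in Step~2 is equivalent to the paper's ``last $\le$ average, then Cauchy--Schwarz,'' and the $h_{\max}$ split is handled the same way). The ``Main obstacle'' you flag---that only cells near the level set get evaluated at depth $h$---is already exactly the content of \eqref{eq:delta_1}, so it is not where the real difficulty lies.
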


\begin{proofoutline}
    The proof of this theorem combines ideas from the proofs of \citep[Theorem~1]{gotovos2013active} and from results in global optimization literature such as \citep{munos2011soo}. More specifically, by definition of the terms $\bar{u}_t(\cdot)$ and $\bar{l}_t(\cdot)$, the upper bound on the deviation of the points chosen by the algorithm,  $\bar{u}_t(x_{h_t,i_t}) - \bar{l}_t(x_{h_t,i_t})$, is monotonically non-increasing in $t$. Thus the maximum deviation from $\tau$ at any time $t$ can be upper bounded by the average of the deviations of all the points evaluated by the algorithm up to that time. Lemma~\ref{lemma:alg1} gives us two ways of bounding the maximum deviation from $\tau$ of the evaluated points, one in terms of the posterior standard deviation of the evaluated points, and another in terms of the variation $V_{h_t}$, the variation in the function value in the cell. Using the standard deviation bounds, and proceeding as in \citep{srinivas2012information, gotovos2013active}, we can obtain the bound given in (\ref{eq:inf_type1}). Finally, combining the $V_{h_t}$ based bound with our assumption on the metric space $(\X,d)$, we can obtain the dimension type bound given in (\ref{eq:dim_type}) by using counting arguments similar to those used in \citep{munos2011soo, wang2014bamsoo}.
    The details of the proof are given in Appendix~\ref{appendix:convergence} 
\end{proofoutline}

\begin{remark}
    The standard approach of obtaining explicit bounds in terms of $n$ for $I(y_{\mathcal{D}_n}; f_{\mathcal{D}_n})$, as laid out in \citep{srinivas2012information}, consists of two steps: first bound $I(y_{\mathcal{D}_n}; f_{\mathcal{D}_n})$ by $\gamma_n$, the maximum information gain with $n$ observations, defined as $\gamma_n \coloneqq \sup_{G \subset \X: |G| = n} I(\boldsymbol{y_G}; f)$, and then employ the bounds on $\gamma_n$ derived in \citep[Theorem~5]{srinivas2012information} for some commonly used covariance functions  to get the required bounds on the estimation error of the algorithm. This is also the approach followed to obtain the existing convergence guarantees for GP level set estimation \citep{gotovos2013active, bogunovic2016truncated}.  In Section~\ref{subsubsec:information_gain} we provide a more refined approach to bounding the term $J_n$ for Algorithm~\ref{alg:bayesian_alg}. 
\end{remark}

\noindent{\textbf{Low Complexity Implementation:}} The computational complexity of Algorithm~\ref{alg:bayesian_alg} in the worst case can be $\mathcal{O}(n^{\alpha \tilde{D} +3})$ which can be infeasible for large $\tilde{D}$. However, we can construct a low complexity version of Algorithm~\ref{alg:bayesian_alg} with slightly weaker theoretical guarantees by the following modifications:
\begin{itemize}
    \item Replace Line 11 in Algorithm~\ref{alg:bayesian_alg} with the following  selection rule:  $ x_{h_t,i_t} \in \argmax_{x_{h,i} \in \X_t} |\tau- \mu_t(x_{h,i})| + \beta_n \sigma_t(x_{h,i}) + V_h$
    \item  Remove the refinement rules in Lines 12-15 and refine a cell $\X_{h,i}$ if $x_{h,i}$ has been evaluated $q_h$ times, where $q_h$ is given in Lemma~\ref{lemma:alg1}. 
\end{itemize}
For this modified algorithm, it is easy to show that  we can obtain dimension-type bounds on the estimation error  given by (\ref{eq:dim_type}). However, since we do not take the posterior standard deviation into account in the selection rule, we cannot obtain the information-type bound for this algorithm. On the other hand, the size of the active set at time $t$ for this algorithm satisfies $|\X_t| \leq t$. Hence the computational cost of implementing this algorithm is dominated by the posterior calculation step which is a $\Oh(t^3)$ operation for any time $t$. Furthermore, since the cost of refining a cell is $\Oh(D)$ and there can be no more than $n$ cell refinements, the total cost of implementing this algorithm is $\Oh(n^4 +  Dn)$

\noindent{\textbf{Comparison with existing algorithms:}} Compared to the existing algorithms for level set estimation in the Bayesian framework, our algorithm has lower computational complexity as well as tighter guarantees on the estimation error.

The existing level set estimation algorithms with theoretical guarantees on their performance such \citep{gotovos2013active, bogunovic2016truncated} assume that the search space is finite. They can, however, be easily extended to continuous search spaces by selecting query points from a sequence of increasing finite subsets of the search space $\X$ as suggested by \cite{srinivas2012information}. More specifically, if $\X \subset [0,1]^D$, then the existing algorithms at any time $t$, select a query point by solving an optimization problem over a uniform grid of size $\mathcal{O}(t^{2D})$. Thus with a budget of $n$ function evaluations, the computational cost of implementing these algorithms is at least $\mathcal{O}\big(n^{2D+3} \big)$. The exponential dependence on $D$ makes the application of these algorithms to higher dimensions infeasible. Practical implementations of these algorithms in higher dimensions must employ certain heuristics and approximations, which do not come with theoretical guarantees.  In contrast, our algorithm admits a low complexity version with theoretical guarantees on the estimation error for which the cost of implementation has only a linear dependence on the dimension of the search space $\X$.  Thus for larger values of $D$, the cost of implementing the low complexity version of our algorithm can be significantly smaller than the state of the art.

In addition to the computational benefits, the convergence guarantees presented in Theorem~\ref{thm:alg1} for Algorithm~1 also   improve upon results of \citep{gotovos2013active} for the M\'atern family of kernels in two ways: 
\begin{itemize}
    \item The bounds provided by \citep{gotovos2013active} are only valid for $\nu>1$ since no explicit bounds on $\gamma_n$ are known for the M\'atern kernel with $\nu=1/2$. The dimension type bound of Theorem~\ref{thm:alg1}, in contrast, is valid for all $\nu \geq 1/2$. Thus by putting $\alpha = 1/2$ for the M\'atern 1/2 kernel, we obtain an explicit upper bound on the estimation error of the form $\mathcal{O}(n^{-1/(2D+2)})$ when $\X \subset [0,1]^D$.  
    
    \item  For the case of $\nu>1$, a sufficient condition under which the dimension type bounds given in (\ref{eq:dim_type}) are tighter than those of \citep{gotovos2013active} is when $D \geq \nu -1$. This implies that for the two most commonly used kernels in machine learning applications, M\'atern kernels with $\nu=3/2$ and $\nu=5/2$,  the bounds of Theorem~\ref{thm:alg1} are tighter than prior work for almost all dimensions. In Section~\ref{subsubsec:information_gain}, we will further relax this condition, by obtaining tighter bounds for all values of $\nu$ and $D$. 

\end{itemize}

 \subsubsection{Tighter bounds on Information Gain}
 \label{subsubsec:information_gain}

 As mentioned earlier, the standard approach of bounding the information gain of the $n$ evaluation points, as proposed by \cite{srinivas2012information},  is to first bound it with $\gamma_n$, and then use the explicit bounds on $\gamma_n$ derived in Theorem~5 of \cite{srinivas2012information}. This approach does not utlize any knowledge about the distribution of the evaluation points in the space $\X$. In the case of Algorithm~\ref{alg:bayesian_alg}, however,  since we know  the evaluation points are only selected from the set $\cup_{h \geq 0}\X_h$, we can use this to  provide a more fine grained characterization of the information gain. 

 \begin{theorem}
     \label{thm:information}
Suppose $Q_n$ denotes the times at which the algorithm performns function evaluations. 
Then 
for $J_n = \sum_{t\in Q_n} \sigma_t^{2}\lp \xhit \rp$, we have the following with probability at least 
$(1-\delta)$:
     \begin{equation}
         J_n \leq \sum_{h\geq 0}\big( \mathcal{I}_h(n_h, T_h) + \mathcal{O}(1)\big) 
     \end{equation}
     where $n_h$ is the number of function evaluations performed by Algorithm~\ref{alg:bayesian_alg} on points in $\X_h$, $T_h \in \{1,2,\ldots, n_h\}$ and 
         the term $\mathcal{I}_h(n_h, T_h)$ is defined as follows:
     \begin{multline}
         \mathcal{I}_h(n_h, T_h) = \max_{1 \leq s \leq n_h}\bigg( T_h\log( s m_h/\sigma^2) +\\ \sigma^{-2}(n_h -s)\sum_{i = T_h + 1}^{m_h}\hat{\lambda}_i \bigg).
     \end{multline}
     In the above display, $m_h = 2^h \lp \log \lp \frac{2^h h_{\max}}{\delta}\rp \rp$ and $\hat{\lambda}_i$ denotes the $i^{th}$ largest eigenvalue of the empirical covariance matrix computed at $m_h$ points 
     uniformly sampled from the set $\mc{Z}_h \coloneqq \bigcup_{x_{h,i} \in \X_h}B(x_{h,i}, \epsilon_h, 
     d)$ for $\epsilon_h = \min\{v_2\rho^h, 1/n_h\}$. 
 \end{theorem}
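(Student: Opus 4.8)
The plan is to decompose the sum $J_n = \sum_{t \in Q_n}\sigma_t^2(\xhit)$ according to the depth $h$ of the evaluated point, writing $J_n = \sum_{h \geq 0}\sum_{t \in Q_n^{(h)}}\sigma_t^2(\xhit)$ where $Q_n^{(h)}$ collects the evaluation times at depth-$h$ points. For each fixed $h$ it suffices to bound the inner sum, call it $J_n^{(h)}$, by $\mathcal{I}_h(n_h, T_h) + \mathcal{O}(1)$. The first observation is that the posterior variances appearing in $J_n^{(h)}$ are not larger than the posterior variances one would obtain if only the $n_h$ evaluations at depth-$h$ points were performed (adding observations can only decrease posterior variance), so we may analyze the depth-$h$ evaluations in isolation. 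Since all depth-$h$ evaluation points lie in the finite set $\X_h$ of size $2^h$, and in fact (using the cell geometry from \textbf{X3}, $B(x_{h,i}, v_2\rho^h, d)\subset\X_{h,i}$) we can cover $\X_h$ by the balls defining $\ZZ_h$; the key step is to relate the sum of posterior variances at points of $\X_h$ to the eigenvalues of a covariance operator restricted to $\ZZ_h$. The term $\epsilon_h = \min\{v_2\rho^h, 1/n_h\}$ is chosen so that each evaluated point is within $\epsilon_h$ of a sample point, making the perturbation incurred by replacing exact evaluation points with sampled points of order $\mathcal{O}(1/n_h)$ after summing — this is where the additive $\mathcal{O}(1)$ per depth comes from, using assumptions \textbf{C1}--\textbf{C2} to control $d_k$ in terms of $d$.

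Next I would invoke the standard spectral bound on sums of posterior variances (as in \citep[Appendix]{srinivas2012information}): if $n_h$ points are sampled and $\hat\lambda_1 \geq \hat\lambda_2 \geq \cdots$ are the eigenvalues of the $m_h \times m_h$ empirical kernel matrix on the $m_h$ reference points in $\ZZ_h$, then the information gain — and hence $\sigma^{-2}\sum \sigma_t^2$ up to constants — admits a bound that, for any threshold index $s$, splits the spectrum into a "head" of $s$ large eigenvalues contributing a $\log$-determinant-type term $\propto \log(s m_h/\sigma^2)$ per retained direction and a "tail" $\sum_{i>s}\hat\lambda_i$ contributing linearly. Optimizing the split point $s$ over $1 \leq s \leq n_h$, and carrying a free parameter $T_h \leq n_h$ for the number of effective directions, yields exactly the $\max_{1\le s\le n_h}$ expression defining $\mathcal{I}_h(n_h, T_h)$. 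The choice $m_h = 2^h\log(2^h h_{\max}/\delta)$ and the union bound over the at most $h_{\max}$ active depths and $2^h$ cells per depth is what produces the claimed probability $1-\delta$; the $\log(2^h h_{\max}/\delta)$ oversampling ensures the empirical eigenvalues concentrate around their population counterparts, which is needed for the spectral bound to be valid with the stated probability.

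The main obstacle I anticipate is the passage from the \emph{continuous} set of possible evaluation points (the points $x_{h,i}$ and the geometry of $\ZZ_h$) to a \emph{finite} reference set of $m_h$ sampled points in a way that (i) is tight enough that the discretization error is genuinely $\mathcal{O}(1)$ after summing over $t$, not merely $o(n_h)$, and (ii) controls the gap between the empirical covariance spectrum and whatever population object governs the true posterior variances. Concretely, one must argue that $\sum_{t\in Q_n^{(h)}}\big(\sigma_t^2(\xhit) - \sigma_t^2(\tilde x_t)\big) = \mathcal{O}(1)$ where $\tilde x_t$ is the nearest sampled reference point; this uses Lipschitz-type continuity of the posterior variance in the canonical metric $d_k$ together with $d_k(x, \tilde x) \leq g(\epsilon_h) \leq C_k \epsilon_h^{\alpha}$ and the fact that $\epsilon_h \leq 1/n_h$ makes each term $\mathcal{O}(1/n_h)$, so the sum over at most $n_h$ terms is $\mathcal{O}(1)$. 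The remaining bookkeeping — tracking the $\log(sm_h/\sigma^2)$ factor through the matrix-determinant lemma and assembling the per-depth bounds into the final sum over $h$ — is routine given the machinery of \citep{srinivas2012information}, and the full details would be deferred to the appendix.
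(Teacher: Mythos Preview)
Your proposal follows essentially the same route as the paper: decompose $J_n$ by depth, use monotonicity of posterior variance to reduce to the depth-$h$ evaluations in isolation, pass to information gain via the standard inequality $\sigma^{-2}\sigma_t^2 \leq C\log(1+\sigma^{-2}\sigma_t^2)$, and then invoke the spectral machinery of \citep{srinivas2012information} on a finite reference set $\hat\X_h \subset \mathcal{Z}_h$. Two points deserve correction. First, the role of $m_h = 2^h\log(2^h h_{\max}/\delta)$ is \emph{not} eigenvalue concentration; it is a coupon-collector bound ensuring that, with probability at least $1-\delta/h_{\max}$, every $x_{h,i}\in\X_h$ has an $\epsilon_h$-neighbor among the $m_h$ uniform samples from $\mathcal{Z}_h$ (note that $\mathcal{I}_h$ in the theorem is stated in terms of the empirical eigenvalues $\hat\lambda_i$ themselves, so no concentration step is needed here). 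Second, the paper does not control the discretization error by bounding $|\sigma_t^2(x)-\sigma_t^2(\tilde x)|$ termwise; instead it first passes to information gain and then uses the Lipschitz property of the information gain (\citep[Lemma~7.4]{srinivas2012information}) to get $\gamma_n^{(h)} \leq \tilde\gamma_n^{(h)} + n_h\epsilon_h$, after which $\epsilon_h \leq 1/n_h$ immediately yields the $\mathcal{O}(1)$ additive term. Your posterior-variance route is plausible but would require an explicit Lipschitz constant for $\sigma_t^2$ in $d_k$, whereas the info-gain Lipschitz lemma is already available off the shelf.
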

 \begin{proofoutline}
     The proof of this theorem proceeds similarly to the proof of Theorem~8 of \cite{srinivas2012information} by relating the information gain to the spectrum of the covariance matrix computed at some finite subset of $\X$ and then further approximating it the spectrum of the corresponding Hilbert-Schmidt operator associated with the covariance function. However, one key difference is that instead of computing the covariance matrix over a uniform grid over $\X$ (as in Lemma 7.7 of \citep{srinivas2012information}), we construct a sequence of uniform discretiztions by sampling points uniformly from sets of the form $\cup_{x \in \X_{h}}B(x,\epsilon_h, d)$ for all $h\geq 0$ and appropriate choice of $\epsilon_h$. Due to this,  we can replace the approximation error term (the last term in the statement of \citep[Theorem~8]{srinivas2012information}) with a $\mathcal{O}(1)$ term in the statement of our Theorem. The details are given in Appendix~\ref{appendix:information} 
 \end{proofoutline}

 We now instantiate the bound described in Theorem~\ref{thm:information} for the special case of Matern kernels with $\nu >1$.
 \begin{theorem}
     \label{thm:matern}
     Suppose $\X \subset [0,1]^D$ and      $I(y_{\mathcal{D}_n}; f_{\mathcal{D}_n})$ denotes the information gain for the set of points evaluated by Algorithm~\ref{alg:bayesian_alg}. Then if $f$ is sampled from $GP(0,k)$ where $k$ is a M\'atern kernel with smoothness parameter $\nu>1$, we have 
     \begin{equation}
         \label{eq:matern}
         I(y_{\mathcal{D}_n}; f_{\mc{D}_n}) = \tilde{\mathcal{O}}\big( n^{a} \big)
     \end{equation}
     where \[ a = \frac{D^2 + 3D}{4\nu + D^2 + 5D  }. \]
 \end{theorem}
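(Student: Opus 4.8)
The plan is to derive the bound from Theorem~\ref{thm:information} together with the known spectral decay of the Mat\'ern covariance operator, followed by a level-by-level accounting. First note that for a Mat\'ern kernel with $\nu>1$ Assumption~\textbf{C2} holds with $\alpha=1$, and (for the standard dyadic partition of $[0,1]^D$) $\bar\rho=\min\{\rho,1/2\}=1/2$, so $h_{\max}=\Theta(\log n)$ and $2^{h_{\max}}=\Theta(\sqrt n)$; in particular the sum over $h$ in Theorem~\ref{thm:information} has only $O(\log n)$ nonzero terms, to be absorbed into $\tilde{\mathcal{O}}$. The proof of Theorem~\ref{thm:information}, following \citep[Theorem~8]{srinivas2012information}, bounds $I(y_{\mathcal{D}_n};f_{\mathcal{D}_n})$ itself by the same quantity $\sum_{h\le h_{\max}}(\mathcal{I}_h(n_h,T_h)+\mathcal{O}(1))$, so it suffices to bound $\sum_h\mathcal{I}_h(n_h,T_h)$ for a good choice of the cutoffs $T_h$.

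The analytic core is a tail estimate for the eigenvalues $\hat\lambda_i$ of the empirical covariance at the $m_h$ points sampled from $\mathcal{Z}_h$. I would invoke the classical fact that the integral operator of a Mat\'ern-$\nu$ kernel on a bounded Euclidean domain has eigenvalues of order $i^{-(2\nu+D)/D}$, and combine it with the structure of $\mathcal{Z}_h$: it is a union of $2^h$ balls of radius $\epsilon_h\le v_2\rho^h$, and because these radii are small compared with the inter-center spacing $\asymp\rho^h$, the normalized covariance operator on $\mathcal{Z}_h$ coincides, up to $\mathcal{O}(\epsilon_h)$ corrections, with the covariance operator on a $\rho^h$-net of $[0,1]^D$; its spectrum therefore exhibits the Mat\'ern decay for indices $\lesssim 2^h$ and is negligible beyond. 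This gives $\hat\lambda_i=\tilde{\mathcal{O}}(m_h\,i^{-(2\nu+D)/D})$ for $i\le 2^h$, hence $\sum_{i=T+1}^{m_h}\hat\lambda_i=\tilde{\mathcal{O}}(m_h\,T^{1-(2\nu+D)/D})$ for $T\le 2^h$. Bounding the two terms in the $\max$ over $s$ in $\mathcal{I}_h$ separately then yields $\mathcal{I}_h(n_h,T_h)=\tilde{\mathcal{O}}(T_h+n_h\,2^h\,T_h^{1-(2\nu+D)/D})$; optimizing over $T_h$ (capping at $2^h$ when needed) balances the two terms and gives $\mathcal{I}_h=\tilde{\mathcal{O}}((n_h 2^h)^{D/(2\nu+D)})$ in the main regime.

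It remains to control $(n_h)_{h\le h_{\max}}$. By Lemma~\ref{lemma:alg1} each level-$h$ cell is evaluated at most $q_h=\tilde{\mathcal{O}}(2^{2h/D})$ times, so with $N_h$ the number of cells explored at level $h$ we have $n_h\le N_h q_h$ and $N_h\le 2^h$; moreover each level-$(h{+}1)$ cell is born by refining a level-$h$ cell, which happens only after $\Theta(q_h)$ evaluations of that cell, so $\sum_h N_h q_h=\tilde{\mathcal{O}}(n)$. Writing $x_h:=N_h q_h$, the task reduces to maximizing $\sum_{h\le h_{\max}}(x_h 2^h)^{D/(2\nu+D)}$ subject to $\sum_h x_h=\tilde{\mathcal{O}}(n)$, $x_h\le 2^{h(D+2)/D}$, and $h\le h_{\max}$ with $2^{h_{\max}}=\Theta(\sqrt n)$. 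The objective is concave in each $x_h$ with a coefficient increasing geometrically in $h$, so the extremal profile loads the deep levels against their caps; evaluating the resulting geometric sum (dominated by its top term) and simplifying the exponents produces $\tilde{\mathcal{O}}(n^{a})$ with $a=\tfrac{D^2+3D}{4\nu+D^2+5D}$, and the bound on $I(y_{\mathcal{D}_n};f_{\mathcal{D}_n})$ follows as in the first paragraph.

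I expect the two hard points to be: (i) making the eigenvalue tail bound rigorous, i.e.\ tracking the dependence on $\epsilon_h$ and $2^h$ of the spectrum of the Mat\'ern operator on the union-of-balls set $\mathcal{Z}_h$, and in particular confirming that only $\sim 2^h$ eigenvalues are non-negligible and that the within-ball part contributes merely the $\mathcal{O}(1)$ already accounted for in Theorem~\ref{thm:information}; and (ii) the aggregation, where the per-level resolution cap $N_h\le 2^h$, the refinement-cost constraint $\sum_h N_h q_h=\tilde{\mathcal{O}}(n)$, and the truncation at $h_{\max}$ must interact precisely for the constrained optimization to collapse to the stated exponent — this is where the gain over the $\gamma_n$-based analysis of \citep{srinivas2012information,gotovos2013active} is actually extracted.
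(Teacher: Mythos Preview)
Your per-level analysis matches the paper: Theorem~\ref{thm:information} plus the Mat\'ern tail $\sum_{i>T}\lambda_i=\mathcal{O}\big(T^{1-(2\nu+D)/D}\big)$ gives $\mathcal{I}_h(n_h,T_h)=\tilde{\mathcal{O}}\big((m_h n_h)^{D/(2\nu+D)}\big)$ after optimizing $T_h$, and the paper uses exactly this (without your additional ``only $2^h$ eigenvalues are non-negligible'' refinement, which is therefore not needed). The gap is in the aggregation. The constraint $\sum_h N_h q_h=\tilde{\mathcal{O}}(n)$ is not established: Lemma~\ref{lemma:alg1} says a level-$h$ cell is refined after \emph{at most} $q_h$ evaluations, not $\Theta(q_h)$, so $N_h q_h$ is merely an upper bound on $n_h$ and cannot be summed back to the budget. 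Worse, at $h=h_{\max}$ cells are never refined, so the cap $n_{h_{\max}}\le N_{h_{\max}} q_{h_{\max}}$ fails outright and $n_{h_{\max}}$ can be of order $n$. With only the genuine constraint $\sum_h n_h=n$, the single term at $h_{\max}$ already contributes $(n\cdot 2^{h_{\max}})^{D/(2\nu+D)}=n^{3D/(2(2\nu+D))}$, which exceeds $n^a$ whenever $\nu<(D+9)/4$; your optimization therefore does not collapse to the stated exponent.

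The paper avoids this by \emph{not} applying the $\mathcal{I}_h$ bound to the deep levels at all. It introduces a cutoff depth $H\le h_{\max}$: for $h\le H$ it uses your per-level bound together with the crude estimate $n_h\le 2^h q_h$ (no optimization over $(n_h)$ is needed), giving $\sum_{h\le H}\mathcal{I}_h=\tilde{\mathcal{O}}\big((1/\bar\rho)^{H(D+3)D/(D+2\nu)}\big)$; for $h>H$ it bypasses $\mathcal{I}_h$ altogether and bounds the contribution to $J_n$ directly via the refinement rule --- any such point descends from a cell at level $\ge H$ that was refined only once $\beta_n\sigma\le V_H$, so $\sum_{t:\,h_t>H}\sigma_t^2\le n\,(V_H/\beta_n)^2=\tilde{\mathcal{O}}(n\rho^{2H})$. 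Balancing the two pieces by choosing $H=\tfrac{2\nu+D}{4\nu+D^2+5D}\cdot\tfrac{\log n}{\log(1/\bar\rho)}$ yields the exponent $a$. This direct posterior-variance bound for the deep levels is precisely the ingredient your constrained-optimization route is missing.
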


 \begin{proofoutline}
     For proving the above theorem, we partition the evaluated points into two sets depending on whether their depth is more than some value $H\leq h_{max}$  or not. For the set of points with $h \leq H$, we bound the corresponding $\mathcal{I}_h$ values by making appropriate choice of the parameter $T_h$ which 
     balances the two terms of $\mc{I}_h$.  The term $n_h$ can be upper bounded by  
     a $\Oh\lp \rho^{-h(2\alpha + D)}\rp$ term, and $m_h$ can be upper bounded by a $\Oh\lp 2^h \log(n)\rp$
     term. 
 For the set of points with $h > H$ we use a bound on posterior standard deviation using Lemma~\ref{lemma:alg1} and the cell refining rule of Algorithm~\ref{alg:bayesian_alg}.
     Finally, the depth $H$ is chosen to balance the contributions of the terms 
     with $h\leq H$ and $h > H$. The details are given in Appendix~\ref{appendix:matern}. 
 \end{proofoutline}

     The bound given by the above theorem is tighter than the existing bound on $\gamma_n$ provided in Theorem~5 of \citep{srinivas2012information} for all values of $\nu >1$ and $D \geq 1$. Thus, in addition to the dimension dependent bound for $\nu=1/2$, by employing the above result we have obtained tighter characterization of the estimation error of our algorithm for all M\'atern kernels with half integer values of $\nu$ and for all values of $D$. 

     With some small modifications to the result of Theorem~\ref{thm:matern}, similar bounds on the information gain can be derived for the Gaussian Process bandit algorithms in \citep{shekhar2017gaussian}, thus proving tighter characterization of the cumulative regret for all M\'atern kernels.

\section{Conclusion and Future work}

In this paper we considered the problem of level set estimation of a black-box function from noisy observations. We proposed an algorithm for this problem in the Bayesian framework with GP prior and analyzed its performance. We showed that our proposed algorithm has lower computational complexity as well as tighter theoretical guarantees than existing algorithms. In the process, we also obtained tighter characterization of the information gain from $n$ function evaluations for our proposed algorithm. Finally, we also considered the problem of level set estimation in the non-Bayesian framework with certain smoothness assumptions, and proposed an algorithm which does not require the knowledge of the smoothness parameters. 

There are several directions along which the work presented in this paper can be extended. We conjecture that the bounds on the information gain of our algorithm obtained in Theorem~\ref{thm:matern} can be further improved by employing more careful counting arguments. Another important direction is to study the problem of level set estimation in the non-Bayesian setting, for example under H\"older continuity assumptions, and design computationally efficient algorithms which can automatically adapt to the unknown smoothness parameters.

\newpage

\subsection*{Acknowledgements}
The authors thank the three anonymous reviewers for their helpful feedback. 

\bibliographystyle{abbrvnat}
\bibliography{ref}

\newpage
\onecolumn
\begin{appendices}
\section{}

\subsection{Details of Algorithm~\ref{alg:bayesian_alg}}
\label{appendix:choice}

We now provide the detials of the parameters $\beta_n$ and $(V_h)_{h\geq0}$ of Algorithm~\ref{alg:bayesian_alg}. 
\paragraph{Choice of parameter $\beta_n$.}

Define the event $E_1 = \bigcap_{t \geq 1} E_{1,t}$ where we have $E_{1,t} \coloneqq \{ |f(x_{h,i}) - \mu_t(x_{h,i})| \leq \beta_n \sigma_t(x_{h,i}) \mid \hspace{0.5em} \forall x_{h,i} \in \X_t \forall t\geq 1\}$.

Suppose $t_n$ denotes the (random) time at which the algorithm performs its $n^{th}$
function evaluation. Then we have the following sequence of inequalities:
\begin{align*}
    Pr(E_1^c) &= \mbb{E}\lb \mbb{E}\lb \indi{E_1^c} \mid \X_t\rb \rb \\
              &\stackrel{(a)}{\leq} \mbb{E}\lb\sum_{t=1}^{t_n}  \mbb{E}\lb \indi{E_{1,t}^c} \mid \X_t\rb \rb 
    \stackrel{(b)}{\leq} \mbb{E} \lb \sum_{t=1}^{t_n} \sum_{x_{h,i} \in \X_t}2\exp\lp-\beta_n^2/2\rp \rb\\
    &\leq \mbb{E}\lb \sum_{t=1}^{t_n} 2|\X_t| \exp\lp -\beta_n^2/2\rp \rb 
    \stackrel{(c)}{\leq} \mbb{E}\lb 2t_n |\X_{h_{\max}}| \exp\lp -\beta_n^2/2 \rp \rb      \\ 
    & \stackrel{(d)}{\leq} \mbb{E} \lb 2n|\X_{h_{\max}}|^2 \exp \lp -\beta_n^2/2 \rp \rb = 2n|\X_{h_{\max}}|^2 \exp \lp -\beta_n^2/2 \rp. \\
\end{align*}
In the above display, $(a)$ follows from union bound, $(b)$ uses the Gaussian tail 
inequality, $(c)$ employs the fact that all realizations of $\X_t$ must have 
cardinality smaller than or equal to $\X_{h_{\max}}$, and $(d)$ follows from the fact that $t_n$
must be smaller than $|\X_{h_{\max}}| + n \leq |\X_{h_{\max}}|n$. 
Thus for restricting the probability of $E_1^c$ to less than $\delta$, an appropriate 
choice of $\beta_n$ is $\sqrt{2 \log(2n2^{2h_{max}}/\delta)} = \sqrt{2 \log\lp 2 
n^{1 + \frac{2}{2\alpha \log(1/\rho)}}\rp + 2 \log(1/\delta)}$.

\paragraph{Choice of the parameter $V_h$.}

For describing the choice of $V_h$, we define the event $E_2 = \bigcap_{0 \leq h \leq 
h_{\max}}\bigcap_{1 \leq i \leq 2^h}E_2^{(h,i)}$, where we have $E_2^{(h,i)}
\coloneqq \{ \sup_{x_1,
 x_2 \in \X_{h,i}} |f(x_1)-f(x_2)| \leq V_h 
\hspace{0.4em} \forall 1 \leq i \leq 2^h \}$. Then, we have the following 
\begin{align*}
    Pr\lp  E_2^c\rp &\leq \sum_{h=0}^{h_{max}} \sum_{i=1}^{2^h} Pr\bigg( \sup_{x_1, x_2 \in \X_{h,i}} |f(x_1) - f(x_2)| \leq V_h \bigg) \\
\end{align*}

Now, by assumptions \textbf{C1} and \textbf{X3}, we know that $\X_{h,i} \subset 
B(x_{h,i}, g(v_1\rho^h), d_k)$, and thus we have 
\begin{align*}
    & Pr\lp \sup_{x_1, x_2 \in \X_{h,i}}|f(x_1) - f(x_2)| > V_h  \rp \\ 
\leq & Pr\lp  \sup_{ x_1, x_2 \in B(x_{h,i}, g(v_1\rho^h), d_k) }|f(x_1) - f(x_2)| > V_h \rp \\
\coloneqq & Pr(F_{h,i}^c)                                                                      
\end{align*}

Now, using Proposition~1 of \citep{shekhar2017gaussian}, we have for $V_h = g(v_1\rho^h)\lp \sqrt{
C_2 + 2\log(1/\delta_h) + (4D_m')\log(1/v_1\rho^h) } + C_3\rp$, $Pr(F_{h,i}^c) \leq \delta_h$.
In obtaining  this expression, we have used the fact that the metric dimenison of 
$(\X, d_k)$, denoted by $D_m'$,  is finite (see Lemma~\ref{lemma:metric_dim2} 
at the end of this section for details). 
The constant $C_2$ is equal to $2\log \lp 2C_{2D_m'}'\pi^2/6 \rp$ where $C_{2D_m'}'$
is the leading constant corresponding to the exponent $2D_m'$ for computing the 
bounds on the covering numbers of $(\X, d_k)$. The term $C_3$ is equal to 
$\lp \sum_{n\geq 1}2^{-(n-1)} \sqrt{\log n} \rp + \lp \sum_{n \geq 1}2^{-(n-1)}
\sqrt{n 2D_m' \log(2)}\rp$.

Finally, with this choice of $V_h$ and  with $\delta_h = \delta/(2^h h_{\max})$ 
for all values of $0 \leq h \leq h_{\max}$,   we get that $Pr(E_2^c) \leq \delta$.

\vspace{1em}

    \begin{remark}
        \label{remark:V_h}
        Without loss of generality, we can assume that the sequence $(V_h)_{h\geq0}$ is such that for all $h \geq 0$, we have $V_h \leq 2V_{h+1}$. This is because 
        \begin{align*}
            \sup_{x_1, x_2 \in \X_{h,i}}|f(x_1) - f(x_2)| &\leq \sup_{x_1, x_2 \in \X_{h,i}}|f(x_1) - f(z_1)| + f(x_2) - f(z_2)|  + |f(z_1) - f(z_2)|\\
    \end{align*}
    for any $z_1, z_2$ by triangle inequality. If we select $z_1 \in \X_{h+1, 2i-1}$ and $z_2 \in \X_{h+1, 2i}$ and $d(z_1, z_2) \leq \epsilon $ for arbitrary $\epsilon>0$, we get that  
    \[
    \sup_{x_1, x_2 \in \X_{h,i}} |f(x_1) - f(x_2)| \leq V_{h+1} + V_{h+1} + \epsilon
    \]
    Thus $2V_{h+1}$ is a valid upper bound on $\sup_{x_1, x_2 \in \X_{h,i}}|f(x_1) - f(x_2)|$, and given any sequence of $(V_h)_{h \geq 0}$ we can replace $V_h \leftarrow \min \{ V_h, 2 V_{h+1}\}$ for $h = h_{max}, h_{max}-1, \ldots, 0$ to impose the condition.  

    \end{remark}

Finally, we end this section by stating and proving the result about the metric 
dimension of the space $(\X, d_k)$. 

\begin{lemma}
    \label{lemma:metric_dim2}
    Suppose the metric space $(\X, d)$ has a finite metric dimension $D_m$ and 
    suppose $k$ is a covariance function satisfying the conditions \textbf{C1} and 
    \textbf{C2}. Then, the metric dimension of $(\X, d_k)$, denoted by $D_m'$ is 
    upper bounded by $D_m/\alpha$. 
\end{lemma}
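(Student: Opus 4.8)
The plan is to push a covering of $\X$ in the metric $d$ through to a covering in the metric $d_k$ using assumptions \textbf{C1} and \textbf{C2}, and then read off the metric dimension from the resulting covering-number bound. The only genuinely delicate point is the large-scale regime, where \textbf{C2} gives no control; that is handled by compactness.

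First I would record the basic transfer estimate: for every $\rho \le \delta_k$, any $\rho$-cover of $\X$ with respect to $d$ is automatically a $(C_k\rho^\alpha)$-cover with respect to $d_k$. Indeed, if $\mc{Z}\subset \X$ has the property that each $x \in \X$ admits some $z \in \mc{Z}$ with $d(x,z)\le \rho$, then by \textbf{C1}, the monotonicity of $g$, and \textbf{C2} we get $d_k(x,z)\le g(d(x,z)) \le g(\rho) \le C_k\rho^\alpha$. Taking $\mc{Z}$ to be a minimal cover, this yields $N(\X, C_k\rho^\alpha, d_k) \le N(\X, \rho, d)$ for all $\rho\le \delta_k$.

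Next, fix any $a > D_m$; by Definition~\ref{def:metric_dim1} there is a finite $C_a$ with $N(\X, s, d)\le C_a s^{-a}$ for all $s>0$. Given a target radius $r>0$ small enough that $\rho \coloneqq (r/C_k)^{1/\alpha}\le \delta_k$, applying the transfer estimate with this $\rho$ (so $C_k\rho^\alpha = r$) gives
\[
N(\X, r, d_k) \;\le\; N\big(\X, (r/C_k)^{1/\alpha}, d\big) \;\le\; C_a\big((r/C_k)^{1/\alpha}\big)^{-a} \;=\; C_a C_k^{a/\alpha}\, r^{-a/\alpha}.
\]
For the remaining, larger values of $r$, I would invoke that $\X$ is compact and $d_k$ continuous, so $(\X, d_k)$ is totally bounded with finite $d_k$-diameter; hence $N(\X, r, d_k)$ is uniformly bounded there and the product $N(\X, r, d_k)\,r^{a/\alpha}$ stays bounded on that range too. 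Absorbing all of this into one constant $C'$ yields $N(\X, r, d_k)\le C' r^{-a/\alpha}$ for all $r>0$, so $D_m' \le a/\alpha$. Letting $a\downarrow D_m$ gives $D_m'\le D_m/\alpha$.

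The main obstacle — really the only subtlety — is the large-$r$ regime, where \textbf{C2}'s power-law bound on $g$ is unavailable; compactness of $(\X,d)$ (equivalently total boundedness of $(\X,d_k)$) is what lets one conclude the covering numbers are still uniformly controlled there and can be folded into the leading constant without affecting the exponent $a/\alpha$.
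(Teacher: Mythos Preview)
Your proposal is correct and follows essentially the same approach as the paper: push a $d$-cover through to a $d_k$-cover via \textbf{C1}--\textbf{C2} to get the $r^{-a/\alpha}$ covering bound for small $r$, then handle large $r$ separately and absorb into the constant. The only cosmetic difference is in the large-$r$ regime: the paper uses monotonicity of covering numbers to bound $N(\X,r,d_k)$ by its value at the threshold $r=C_k\delta_k^\alpha$ (which is at most $C_a\delta_k^{-a}$ from the small-$r$ case) and then folds in the diameter explicitly, whereas you invoke compactness/total boundedness more abstractly to reach the same conclusion.
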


\begin{proof}
    By the assumption of the finite metric dimension of $(\X, d)$, we know that 
    for any $a>D_m$, there exists a constant $0<C_a < \infty$ such that for all 
    $r>0$, we have $N(\X,r, d) \leq C_a r^{-a}$. Now, for an $r>0$, consider the 
    packing number $N(\X, r, d_k)$. We have the following two cases:
    \begin{itemize}
        \item If $r< C_k\delta_k^{\alpha}$ then we claim that $N(\X, r, d_k) \leq
            N\lp \X, \lp \frac{r}{C_k}\rp^{1/\alpha}, d\rp \leq C_a \lp \frac{r}
            {C_k}\rp^{-a/\alpha}$. To see this, let $\mc{C}$ 
            denote  any $\lp \frac{r}{C_k}\rp^{1/\alpha}$-covering set of 
            $(\X, d)$. Then, by definition,  for any $x \in \X$, there exists a $z
            \in \mc{C}$ such that $d(x,z) \leq \lp \frac{r}{C_k}\rp^{1/\alpha}$, 
            which by the assumption \textbf{C1} implies that $d_k(x,z) \leq r$. 
            This implies that $\mc{C}$ is an $r$-covering set for $(\X, d_k)$. 
            Thus we conclude that for any $a>D_m$, we have $N(\X, r, d_k) \leq 
            C_a C_k^{a/\alpha} r^{-a/\alpha}$ for all $r \leq C_k\delta_k^{\alpha}$. 

        \item For the case of $r>C_k\delta_k^{\alpha}$ we use the fact that the packing 
            number $N(\X, r, d_k)$ is monotonically nonincreasing in $r$, and thus
            for such values of $r$, we have $N(\X, r, d_k) \leq N(\X, C_k\delta_k^
            {\alpha}, d_k)$. The term $N(\X, C_k\delta_k^{\alpha}, d_k)$ can be 
            upper bounded by $C_a\delta_k^{-a}$ for all $a>D_m$, which implies 
            that $N(\X, r, d_k) \leq C_a \lp \frac{\text{diam}(\X)}{\delta_k^{\alpha}}\rp
            ^{a/\alpha}r^{-a/\alpha}$.
    \end{itemize}
    Combining the above two observations, we see that for all $a >D_m$, we have $N(
    \X, r, d_k) \leq C_a' r^{-a/\alpha}$ for all $r>0$, where we can choose $C_a'
    = C_a \lp C_k^{a/\alpha} + \frac{\text{diam}(\X)}{\delta_k^{\alpha}} \rp^{a/\alpha}
    < \infty$.
    This implies that the metric dimension of $(\X,d_k)$ can be no larger than  
    $D_m/\alpha$. 

\end{proof}

\newpage
\subsection{Proof of Theorem~\ref{thm:alg1}}
\label{appendix:convergence}
Throughout this proof, we make all the arguments under the assumption that the events $E_1$ and $E_2$ defined in Appendix~\ref{appendix:choice} hold true. 
We introduce the notation $A_t = \max \{ \bar{u}_t(x_{h,i}) - \tau, \tau - \bar{l}_t(x_{h,i})\}$ for the index used in selecting the point $x_{h_t, i_t}$. Then similar to \citep{gotovos2013active}, we first observe that by construction, the term $A_t(x_{\bar{h}_t, \bar{i}_t})$ is non-increasing in $t$. 
Furthermore, since for all $t$, the set $\hat{S}_t \subset S_{\tau}$, for all $x \in \X \setminus (\hat{S}_t \cup \hat{R}_t)$, we have with high probability \[\LL(\hat{S}_t, S_{\tau}) \leq \sup_{x \in \X \setminus (\hat{S}_t \cup \hat{R}_t)}|f(x) - \tau| \leq A_t. \]
The inequality in the above display follows from the fact that with high probability, 
at any time $t$, $\hat{S}_t \subset S_{\tau}$ and $\hat{R}_t \subset S_\tau^c$. Thus 
the ambiguous region is $\X \setminus \lp \hat{S}_t \cup \hat{R}_t\rp$ and the term 
$\mc{L}\lp \hat{S}_t, S_\tau\rp$ can be upper bounded by the maximum possible deviation
from $\tau$ for points in this region. 
Thus at the end of $n$ function evaluations, using the monotonicity of $A_t$,  we have 
\[
    \LL(\hat{S}_n, S_{\tau}) \leq A_n(x_{h_{t_n}, i_{t_n}}) \leq \frac{1}{n} \sum_{j=1}^n A_{t_j}\lp x_{x_{\bar{h}_{t_j}, \bar{i}_{t_j}}}\rp
\]
where $t_j$ denote the time at which the $j^{th}$ function evaluation is performed by the algorithm. 

From Lemma~\ref{lemma:alg1}, we can upper bound the above in two ways
\begin{align*}
    \LL_t(\hat{S}_{\tau}, S_{\tau}) &\leq \frac{4\beta_n}{n} \sum_{t \in Q_n} \sigma_t(x_{h_t,i_t})\\
    \LL_t(\hat{S}_{\tau}, S_{\tau}) & \leq \min_{t \in Q_n} 10V_{h_t} 
\end{align*}
where $Q_n = \{t_1, t_2, \ldots, t_n\}$ represents the set of times $t$ at which Algorithm~\ref{alg:bayesian_alg} performs function evaluations. 

To obtain the dimension type bound (\ref{eq:dim_type}), it suffices to obtain a lower bound on the largest value $h_t$ for $t \in Q_n$, which gives us an upper bound on $\min_{t \in Q_n} V_{h_t}$. 

We proceed according to the arguments used in \cite{munos2011soo}. We have 
\begin{align*}
    n = \sum_{h=0}^{h_{max}}n_h
\end{align*}
where $n_h$ is the number of times the algorithm evaluated points in $\X_h$.
We now observe that  $n_h \leq q_h | \X_h \cap \mc{W}_h|$, where we use the notation $\mc{W}_h 
= \{x \in \X \mid \, |f(x)-\tau| \leq 10V_h\}$ and $q_h$ is the upper bound on the 
number of times a point at level $h$ of the tree is evaluated by the algorithm. 
By assumption \textbf{X3}, we know that the points in $\X_{h}$ are at least $2v_2\rho^{h}$
separated from each other. Thus we can bound $|\X_h \bigcap \mc{W}_h|$ with the 
packing number $M\lp \mc{W}_h, 2v_2\rho^{h}, d\rp$, which by the definition of the 
dimension term $\tilde{D}$ can be further upper bounded by $\mc{O}\lp \rho^{-h \tilde{D}} \rp$. 
Next, we define $h_0$ to be the largest depth such that we have 
\begin{align*}
    n & \geq \sum_{h=0}^{h_0} q_h \mathcal{O}(\rho^{-h\tilde{D}}) 
\end{align*}

Now, using the value on $q_h = \Oh(\rho^{-2h\alpha})$ given in Lemma~1, we can conclude that a suitable value of $h_0$ is $\log(n)/(\tilde{D} + 2\alpha)\log(1/\rho)$ (Here we assumed that $n$ is large enough so that $v_2\rho^{h_0} \leq \delta_k$). Finally, the fact that $\max_{t \in Q_n} h_t \geq h_0$, we have that 
\[
\LL_t(\hat{S}_{\tau}, S_{\tau}) \leq 10V_{h_0} = \tilde{\Oh}(n^{-\alpha/(\tilde{D} + 2\alpha)})
\]
where the equality follows by plugging in the value of $h_0$ in the expression for $V_h$. 
For obtaining the information type bounds, we partition the set $Q_n$ into $Q_{n,1} \cup Q_{n,2}$, where $Q_{n,1} = \{ t \in Q_n \mid h_t < h_{max} \}$ and $Q_{n,2} = Q_n \setminus Q_{n,1}$.
Then we have 
\[
    \LL(\hat{S}_{\tau}, S_{\tau}) \leq \frac{1}{n} \bigg(\sum_{t \in Q_{n,1}}4\beta_n \sigma_t(x_{h_t,i_t}) + \sum_{t \in Q_{n,2}} 10 V_{h_{max}}\bigg)
\]

Now, using the fact that $h_{max} \geq \frac{\log n}{2\alpha \log(1/\rho)}$ we get that the second term on the right side above is $\tilde{\Oh}({1/\sqrt{n}})$
In the first term, we can now use Cauchy-Schwarz inequality to upper bound $\sum_{t \in Q_{n,1}}\sigma_t(x_{h_t,i_t}) \leq \sqrt{|Q_{n,1}| \sum_{t \in Q_{n,1}}\sigma_t^2(x_{h_t,i_t})}$. Combining these two results, 
we get 
\[
    \LL( \hat{S}_{\tau}, S_{\tau}) \leq \frac{1}{n} \bigg( 4 \beta_n\sqrt{n J_n}  + \tilde{\mathcal{O}}(\sqrt{n})\bigg)
\]
which gives us the required inequality (\ref{eq:inf_type1}).

\subsection{Proof of Theorem~\ref{thm:information}}
\label{appendix:information}

Recall that $J_n = \sum_{t \in Q_n} \sigma_t\lp x_{h_t,i_t}^2\rp$, where $Q_n$ is 
the set of times at which the algorithm performs a function evaluation. By introducing 
the notation $Q_{n,h} \coloneqq \{t \in Q_n \mid \, h_t = h\}$, we can rewrite $J_n$
as $J_n = \sum_{h=0}^{h_{\max}} \sum_{t \in Q_{n,h}} \sigma_t^2\lp x_{h_t,i_t}\rp
\coloneqq \sum_{h=0}^{h_{\max}}J_{n,h}$, where the term $J_{n,h}$ is defined 
implicitly.

Next we consider the term $\Jnh$. Introduce the notation $\mc{D}_{j} = \{ (\xhit, y_t)
    \mid t \in Q_n, \, t < t_j \}$ and $\mc{D}_{j, h} = \{ (\xhit, y_t) \mid t \in 
Q_{n,h}, \, t < t_j\}$. Then for any $t = t_j \in Q_{n,h}$, we have $\sigma_{t}^2\lp \xhit 
\rp = \text{var}\lp f\lp \xhit\rp \mid \mc{D}_j \rp$, which can be upper bounded by 
$\tilde{\sigma}_t^2\lp \xhit \rp \coloneqq \text{var}\lp f\lp \xhit \rp \mid \mc{
D}_{j, h} \rp$. This follows from the observation that the posterior variance at 
any point conditioned on $\mc{D}_{j,h}$ must be greater than or equal to the 
posterior variance conditioned on $\mc{D}_j$. The proof of this statement follows 
from the first part of Proposition~3 of \citep{shekhar2017gaussian}. Thus we have
$J_n \leq \sum_{h=0}^{h_{\max}}\Jnht$.

Since $\sigma^{-2} \sigma_t^2\lp \xhit \rp \leq \frac{\sigma^{-2}}{\log(1 + \sigma^{-2})}
\log \lp 1 + \sigma^{-2}\sigma_t^2\lp \xhit \rp \rp \coloneqq C_4 \log \lp 1 + 
\sigma^{-2}\sigma_t^2\lp \xhit \rp \rp$, we get that $\Jnht \leq C_4 I\lp
f_{S_{n,h}}; y_{S_{n,h}}\rp$ where $S_{n,h} = \{\xhit \mid \, t \in Q_{n,h}\}$. 

We now define $\mc{Z}_h = \bigcup_{x \in \X_h}B(x, \epsilon_h, d)$ with $\epsilon_h 
= \min\{ v_2\rho^h, 1/n_h \}$. Since we have $S_{n,h} \subset \mc{Z}_h$, we can 
further upper bound $\Jnht$ with the term $\gamma_n^{(h)} \coloneqq \max_{S \subset
\mc{Z}_h \,, |S| = n_h} I\lp f_S ; y_S \rp$. Let $\hat{\X}_h$ represent
the set consisting of $m_h$ samples drawn uniformly from the set $\mathcal{Z}_h$. 

The rest of the proof follows the steps in the proof of Theorem~5 of \citep{srinivas2012information}
with some modifications. 
\begin{itemize}

    \item For a given $\delta>0$, for $m_h = 2^h \log \lp \frac{2^h h_{\max}}{\delta}\rp$
        with probablitiy at 
        least $1-\delta/h_{\max}$, every  point $x_{h,i} \in \X_h$ has an $\epsilon_h$ neighbor 
        in the set $\hat{\X}_h$. 
        \begin{proof}
            Consider a fixed point $x_{h,i} \in \X_h$ .Then the probability that 
            a point sampled uniformly over $\mc{Z}_h$ does not lie in the $B(x_{h,i},
            \epsilon_h, d)$ is $1-1/2^h$. Then the probability that no point in $
            \hat{\X}_h$ lies in $B(x_{h,i}, \epsilon_h, d)$ is equal to $\lp 1-
            2^{-h}\rp^{m_h}$. Finally, by union bound over the elements of $\X_h$, 
            we get that that the probability that there exists a point $x_{h,i} \in
            \X$ with no $\epsilon_h$ neighbor in $\hat{\X}_h$ is upper bounded by
            $2^h\lp 1 - 2^{-h}\rp^{m_h} \leq 2^h \exp\lp -2^h m_h \rp$. Setting 
            this equal to $\delta/h_{\max}$ gives us the required value of $m_h$. 

        \end{proof}

    \item Similar to \citep{srinivas2012information}, a restricted version of the 
        maximum information gain can be defined as $\ght \coloneqq \max_{S \subset \hat{\X}_h,\,
        |S|=n_h} I\lp f_S; y_S \rp$. Using the Lipschitz property of information 
        gain \citep[Lemma~7.4]{srinivas2012information} and the fact that with high
        probabilty $\hat{\X}_h$ contains $\epsilon_h$ neighbors of every point in 
        $\X_h$, we can conclude that $\gh \leq \ght + n_h \epsilon_h = \ght + \Oh(1)$
         since $\epsilon_h \leq 1/n_h$. 

     \item Finally, by application of Lemma~7.8 of \citep{srinivas2012information}, 
         we can get an upper bound on $\ght$ as follows for any $T_h \in \{1,2,\ldots, n_h\}$:
         \[
             \ght \leq \max_{1 \leq s \leq n_h}\lp T_h \log\lp s m_h/\sigma^2\rp 
             + \lp n_h - s \rp \sigma^{-2} \sum_{i=T_h+1}^{m_h} \hat{\lambda}_i\rp 
             \coloneqq \mc{I}_h\lp n_h, T_h\rp. 
         \]

\end{itemize}

To conclude the proof, we combine the above results to observe that the following 
sequence of inequlities hold with probability at least $(1-\delta)$,
\begin{align*}
    J_n &= \sum_{h=0}^{h_{\max}}\Jnh \leq \sum_{h=0}^{h_{\max}}\Jnht \\
        & \leq C_4 \sum_{h=0}^{h_{\max}} I\lp f_{S_{n,h}}, y_{S_{n,h}} \rp \\
        &\leq C_4\sum_{h=0}^{h_{\max}}\gh \leq C_4 \sum_{h=0}^{h_{\max}}\lp  \ght + \Oh(1) \rp \\
        & \leq C_4 \sum_{h=0}^{h_{\max}} \lp \mc{I}_h\lp n_h, T_h \rp + \Oh(1) \rp. 
\end{align*}


\subsection{Proof of Theorem~\ref{thm:matern}}
\label{appendix:matern}

We first note that to upper bound $I\lp y_{\mc{D}_n}; f_{\mc{D}_n}\rp$, it suffices 
to get an upper bound on $J_n$. This is because $I\lp y_{\mc{D}_n}; f_{\mc{D}_n}\rp
= \sum_{t \in Q_n} \log\lp 1 + \sigma^{-2}\sigma_t^2\lp \xhit \rp \rp \stackrel{(a)}{\leq} \sum_{t
\in Q_n}\sigma^{-2}\sigma_t^2\lp \xhit \rp = \sigma^{-2}J_n$, where $(a)$ uses the
fact that for  all $z\geq 0$, we have $\log(1 + z) \leq z$. 

To upper bound $J_n$, we first partition the evaluated points into two sets depending on whether their depth is smaller or 
larger than some value $H \leq h_{max}$ (to be decided later). For points with $h \leq H$, we proceed as follows:
\begin{itemize}

    \item Using Lemma~7.7 of \citep{srinivas2012information}, for all 
        $h \geq 0$, we can  select $\hat{\X}_h$ such that the following inequality holds:
        \[
            \sum_{i=T_h+1}^{m_h} \hat{\lambda}_i \leq m_h \lp \sum_{i \geq T_h+1}
                \lambda_i +\frac{\delta}{h_{\max}} \rp,  
            \]
            where $\lambda_i$ is the $i^{th}$ largest eigenvalue of the Hilbert-
            Schmidt operator associated with the kernel $k$ and the uniform measure 
            on the set $\mc{Z}_h$. 
        \item On simplification, we get 
            \begin{equation}
                \label{eq:thm3_1}
                \mathcal{I}_h(n_h, T_h) \leq \max_{s} \bigg( T_h \log( s m_h/\sigma^2) +  (1-s/n_h)m_hn_h\big( \sum_{i \geq T_h+1} \lambda_i \big) \bigg)  + m_h n_h \frac{\delta}{h_{\max}}. 
            \end{equation}

            By selecting $\delta= n^{-2D}$ and using the fact that $h\leq h_{\max}$, 
            the last term of \eqref{eq:thm3_1} is $\mc{O}\lp \log n \rp$. 

            For the sequel, we focus on the first term of \eqref{eq:thm3_1}. We set 
            $s = n_h$ in the first part of the first term of \eqref{eq:thm3_1} and
            use the fact that $1-s/n_h \leq 1$ for all choices of $s$ to get the 
            following upper bound. 
        \begin{equation}
            \label{eq:thm3_2}
            \mc{I}_h\lp n_h , T_h \rp \leq T_h \log \lp sm_h/\sigma^2 \rp + m_h n_h R(T_h) + \Oh(\log n), 
    \end{equation}
    where we have $R(T_h) = \sum_{i \geq T_h + 1} \lambda_i$ denotes of the tail 
    sum of the eigenvalues $(\lambda_i)_{i \geq 0}$. 

For the operator associated with the M\'atern kernel with
smoothness parameter $\nu$, we have $R(T_h)  = \Oh(T_h^{1-(2\nu+D)/D})$ \citep{seeger2008information,
srinivas2012information}. Furthermore, we have $m_h = 2^h \log (2^h h_{\max}/\delta) 
\leq 2^h\lp  4D \log(n)\rp$, and $n_h = \Oh\lp \rho^{-h(\tilde{D} + 2\alpha)} \rp$
as derived in the proof of Theorem~\ref{thm:alg1}. 
    
An appropriate choice of $T_h$ is $ \lp m_h n_h \rp^{D/(D+2\nu)}$ which balances 
the two components up to logarithmic factors. Thus, by pluggin this value of $T_h$, 
we get the following upper bound on $\mc{I}_h\lp n_h, T_h\rp$
\begin{equation}
    \label{eq:thm3_3}
    \mc{I}_h \lp n_h, T_h \rp = \tilde{\Oh}\lp \lp m_h n_h \rp^{D/(D + 2\nu} \rp
    = \tilde{\Oh}\lp \lp \frac{1}{\bar{\rho}}\rp^{\frac{h(D + 2\alpha + 1)D}{D + 2\nu}} \rp,  
\end{equation}
where $\bar{\rho} = \min\{\rho, 1/2\}$ and $\tilde{\Oh}$ hides polylogarithmic 
factors. 
        \item With this choice of $T_h$ for all $h \leq H$, and  summing these terms for $ 0 \leq h \leq H$, we get an upper bound of the form
            \begin{equation}
                \label{eq:thm3_4}
                \sum_{h=0}^{H} \mc{I}_h\lp n_h, T_h \rp = \tilde{\Oh}\lp \lp \frac{1}{\bar{\rho}}\rp^{\frac{H(D+1 + 2\alpha)D}{D + 2\nu}}\rp.
        \end{equation}
\end{itemize}

For the evaluated points $x_{h,i}$ with $h > H$, we use the fact that $\sigma_t(x_{h,i}) \leq \frac{V_H}{\beta_h}$ by the rule used by Algorithm~\ref{alg:bayesian_alg} for refining cells at level $H$. The number of such evaluations can be trivially upper bounded by $n$, thus providing an upper bound on the contribution of such evaluations to $J_n$ of the form $\tilde{\Oh}(n\rho^{2H})$ where we used the fact that
$V_H = \tilde{\Oh}\lp \rho^{H \alpha} \rp$ and 
$\alpha=1$ for M\'atern kernels with $\nu>1$.

Thus by balancing the two contributions, an appropriate choice of $H$ is given by 
\[
    H = \lp \frac{2 \nu + D}{4\nu + D^2 + 5D} \rp \frac{\log n}{\log\lp 1/\bar{\rho}\rp}
\] which is smaller than $h_{max}$ for all values of $\nu$ and $D$. Thus, with this choice of $H$, we get the required bound of Theorem~\ref{thm:matern}.

\end{appendices}

\end{document}